\pgfplotsset{compat=1.16}
\begin{document}

\title{On the Natural Gradient of the Evidence Lower Bound}

\author{\name Nihat Ay \email nihat.ay@tuhh.de \\
       \addr Institute for Data Science Foundations\\
       Hamburg University of Technology\\
       21073 Hamburg, Germany \\
       \\
       \vspace{-0.6cm}\\
       \addr Santa Fe Institute\\
       Santa Fe, NM 87501, USA \\
       \\
       \vspace{-0.6cm}\\
       \addr Leipzig University\\
       04109 Leipzig, Germany
       \AND \\
       \vspace{0.1cm}
       \\
       \name Jesse van Oostrum \email jesse.van@tuhh.de \\
       \addr Institute for Data Science Foundations\\
       Hamburg University of Technology\\
       21073 Hamburg, Germany
       \AND \\
       \vspace{0.1cm}
       \\
       \name Adwait Datar \email adwait.datar@tuhh.de \\
       \addr Institute for Data Science Foundations\\
       Hamburg University of Technology\\
       21073 Hamburg, Germany}

\editor{Dan Alistarh}

\maketitle

\begin{abstract}

This article studies 
the Fisher-Rao gradient, also referred to as the natural gradient, of the evidence lower bound (ELBO) which plays a central role in generative machine learning. It reveals that the gap between the evidence and its lower bound, the ELBO, has essentially a vanishing natural gradient within unconstrained optimization. 
As a result, maximization of the ELBO is equivalent to minimization of the Kullback-Leibler divergence from a target distribution, the primary objective function of learning.
Building on this insight, we derive a condition under which this equivalence persists even when optimization is constrained to a model. This condition yields a geometric characterization, which we formalize through the notion of a \textit{cylindrical model\/}.

\end{abstract}
\medskip

\begin{keywords}
  Evidence lower bound, variational gap, natural gradient, information geometry, variational inference
\end{keywords}

\section{Introduction}
\label{submission}
Generating samples from a complex target probability distribution represents the key challenge of generative machine learning. Typical examples of such a distribution are given in terms of natural images or token sequences in large language models. 
A primary objective function for training a generative network is based on the log-likelihood of samples, referred to as the {\em evidence\/}. In order to train a generative network, a corresponding recognition network has to be trained, with which the evidence is replaced as an objective function by the {\em evidence lower bound\/} ({\em ELBO\/}). 
This bound has its roots in variational methods, originally developed by Feynman \citep[see, for example, Chapter 3, Section 3.4 of][]{feynman1972statistical} in the context of statistical physics, where it was employed to approximate free energy.
These variational methods have since been adapted for statistical inference and machine learning, proving especially useful in the formulation of the Helmholtz Machine \citep{dayan1995helmholtz, ikeda1998convergence} and other early applications \citep{hinton1993keeping, hinton1993autoencoders, mackay1995developments}.
In more recent applications, the ELBO has become a core objective for training deep generative models, such as the Variational Autoencoder (VAE) \citep{kingma2013auto} and other deep generative models \citep{rezende2014stochastic}. Beyond machine learning, the ELBO also holds a central role in cognitive science and neuroscience, underpinning the Free Energy Principle \citep{friston2005theory}.
More recently, a generalization of the ELBO called the \textit{generalized evidence lower bound} (GLBO) has been proposed for model selection to ensure better generalization  \citep{chen2018variational}.
A closely related idea of using a so-called Stein gradient instead of the usual gradient of the lower bound has been pursued in \citep{pu2017vae}.
As a generalization of the Kullback-Leibler divergence, R{\'e}nyi's $\alpha$-divergences (related to but different from the $\alpha$-divergence in information geometry) have been studied in \citep{li2016renyi} where a smooth interpolation between the evidence lower bound and the log (marginal) likelihood is formalized via the parameter $\alpha$ thereby unifying a number previously existing approaches.
\medskip

Intuitively, one should expect that the gap between the evidence and its lower bound, the so-called {\em variational gap\/}, crucially affects the quality of learning. Various components of the variational gap and their influence on the learning have been studied, including the {\em approximation gap\/}, the {\em amortisation gap\/}, and the {\em conditioning gap\/} \citep{bayer2021mind}.
While tightening the bound appears to be beneficial at first sight, it has also been observed that a tighter bound does not necessarily imply an improvement and can even compromise the objective of learning
\citep{Rainforth2018TighterVB}.  
Aiming at an explanation of this phenomenon, our article is based on the simple idea that learning in terms of gradient methods is not so much dependent on the variational gap itself but on its gradient. While the variational gap can be rather large, its gradient might vanish (in  a particular sense that we are going to specify) and therefore has no effect on the learning. 
We will pursue this idea with the help of information geometry \citep{amari2000methods, amari2016information, ay2017information}, a framework that is particularly appropriate for analyzing the evidence lower bound and the variational gap. In particular, we will study the natural gradient \citep{amari1998} of both quantities, that is the gradient with respect to the Fisher-Rao metric which we will introduce below. Our analysis will reveal a geometric criterion for the variational gap to have no effect on the learning.
This core result depends crucially on the information-geometric structures and does not hold for the standard Euclidean geometry that underlies most existing gradient-based algorithms.
In what follows, we briefly outline the framework of information geometry. 
\medskip

Originating from statistics, information geometry provides efficient methods for the field of machine learning which are based on duality concepts from differential geometry \citep{amari2000methods, amari2016information, ay2017information}. Most prominently, it suggests as a fundamental structure a Riemannian manifold $({\mathcal P},g)$, equipped with a pair $(\nabla, \nabla^\ast)$ 
of affine connections that are dual with respect to the Riemannian metric $g$. A particularly important situation is given when the two 
connections are flat, which implies the existence of a pair of dual affine coordinate systems and a corresponding canonical divergence 
$D: {\mathcal P} \times {\mathcal P} \to {\mathbb R}_+$. 
In this case, the geometry is comparable with the Euclidean geometry of ${\Bbb R}^d$, with $D$ being proportional to the standard squared distance function in ${\Bbb R}^d$.
These structures can lead to  highly efficient learning algorithms when consistently used together. 
To be more precise, the distinguished canonical divergence $D$ offers a natural way to define an objective or risk function $\mathcal{L}:\cP \rightarrow \mathbb{R}$ for learning. 
When optimizing this divergence in terms of the gradient descent method, the Riemannian metric $g$ should be applied to define the natural gradient ${\rm grad}_p \mathcal{L}$ in $p\in \mathcal{P}$ via the equation
\begin{equation}\label{eq:nat_grad_defn}
    d\mathcal{L}_p(A) \, =\, g_p({\rm grad}_p \mathcal{L},A)
\end{equation}
for all tangent vectors $A$ in the tangent space $T_p \mathcal{P}$.
This leads to the {\em natural gradient method\/} which plays a crucial role in the theory of neural networks and machine learning \citep{amari1998, ollivier2015, martens2020new}. With these choices, the learning trajectories are then simply straight lines in the above-mentioned dual affine coordinate systems. Loosely speaking, the learning converges to a solution in the most direct way \citep{fujiwara1995gradient, datar2025convergence}. This demonstrates the simplicity and efficiency of learning as a result of a consistent combination of the underlying geometric structures.  
\medskip

Despite the great advantages of the outlined information-geometric approach to learning, it is a highly non-trivial task to actually utilize and implement this approach within the setting of machine learning. In what follows, we highlight two complications that are particularly relevant for this article. 
\begin{enumerate}
\item The manifold    
${\mathcal P}$ of the above paragraph plays the role of a high-dimensional ambient space, equipped with a 
dually flat structure $g$, $\nabla$, and $\nabla^\ast$.  
Thus, it comes with a canonical divergence for learning, as outlined above. 
The learning, however, is typically restricted to a lower-dimensional model 
${\mathcal M} \subseteq {\mathcal P}$. The restriction of the convenient geometric structures on ${\mathcal P}$ to the model ${\mathcal M}$ is typically much more complex. Only in exceptional cases, this restriction preserves the simplicity of the geometry of ${\mathcal P}$.

\item In addition to that, we face another potential source of complication. 
Typically, the expressive power of a learning system has to be increased in terms of a set of latent or hidden units denoted by $H$. 
In this case, the primary model for learning is associated with the observed or visible units denoted by $V$. It is obtained as the image ${\mathcal M}_V$ of a model ${\mathcal M}$ under the marginalization map. 
Even if ${\mathcal M}$ inherits geometric properties from its ambient space ${\mathcal P}$ that are advantageous for learning, these properties need not be preserved under this marginalization.  
\end{enumerate}
\medskip

To summarize, we face two sources of complexity when designing information-geometric learning algorithms, the restriction of natural structures from the ambient space ${\mathcal P}$ to the model ${\mathcal M}$, and the marginalization which maps ${\mathcal M}$ to the model ${\mathcal M}_V$.
In this article, we aim to disentangle  the individual complexities resulting from these two operations by studying the optimization processes first on $\cP$ and then extend the analysis to the constrained setting $\cM$. We follow this reasoning in order to discuss the evidence lower bound and the variational gap from an information-geometric perspective. We relate the maximization of the evidence to the maximization of its lower bound in view of information geometry and highlight the simplicity and consistency of both optimization problems when considered in the full ambient space, without restricting it to a model ${\mathcal M}$. 
We show that in this case the evidence lower bound leads to the same natural gradient field as the original objective function, the evidence, which we find remarkable.
This equivalence is not necessarily preserved when restricting the optimization to a model ${\mathcal M}$. We provide a sufficient condition for this to hold, which requires the notion of a cylindrical model.    
\medskip 

In this article, we follow two story lines, one referring to the evidence and its lower bound and one referring to corresponding Kullback-Leibler divergences.
We use the former story line to formulate the main problem and to convey the key findings without assuming a background in information geometry.
The latter story line is more convenient for our information-geometric studies.  
Section \ref{target} introduces the primary objective of learning, minimizing the Kullback-Leibler divergence from a target distribution on states of the visible units, and briefly outlines its relation to the evidence and its lower bound. This section is generally accessible, without a background in information geometry.
In Section \ref{basic}, we are then going to review basic information-geometric structures, thereby introducing the notation used in this article. 
This section also includes results from the previous work \citep{ay2020locality} on which this article is based. Section \ref{extendedpr} deals with the analysis of the optimization problem for the extended system, including visible and hidden units, and relates it to the primary optimization problem defined for its visible part. Section \ref{natelbo} relates these results to the evidence and its lower bound, thereby making statements on their respective natural gradients. Section \ref{bayesnet} concludes with a result that is particularly helpful when dealing specifically with Bayesian graphical models.   

\section{Learning a Target Distribution and the Evidence Lower Bound} \label{target}
Throughout this article, 
we consider a system consisting of visible units $V$ and hidden units $H$
taking values in state sets $\mathsf{X}_V$ and $\mathsf{X}_H$, respectively.
For simplicity, we assume $\mathsf{X}_V$ and $\mathsf{X}_H$ to be finite.
The set of all strictly positive probability distributions on joint states $(x_V, x_H)$ is denoted by ${\mathcal P}_{V,H}$, which we also abbreviate as ${\mathcal P}$. In order to study learning in terms of the natural gradient method, we consider a model ${\mathcal M}$ consisting of probability distributions $p_\theta(x_V,x_H) = p(x_V,x_H; \theta)$ which are parametrized by a parameter vector $\theta$ in ${\Bbb R}^d$. 
Typically, the parameter set is an open subset $\Theta$ of 
${\mathbb R}^d$, and we obtain ${\mathcal M}$ as the image of the parametrization  
\begin{equation} \label{parametr}
 \varphi: \; \Theta \; \rightarrow \;  {\mathcal M} \subseteq {\mathcal P}, \qquad  \theta \; \mapsto \; p_\theta. 
\end{equation}
The model $\cM$ is referred to as a 
{\em generative model\/}. The objective of learning is to generate a probability distribution on visible states $x_V$ that is close to some target distribution. Here, we interpret the hidden units merely as auxiliary units to increase the expressive power. The learning objective should therefore only refer to the visible units. To be more precise, we denote by ${\mathcal P}_V$ the set of strictly positive probability distributions on states $x_V$ and consider the natural marginalization map 
\[
    \pi_V : {\mathcal P} \to {\mathcal P}_V, 
\]
which assigns to a joint probability distribution $p(x_V, x_H)$ the marginal distribution 
\begin{equation} \label{margina}
      p(x_V) := \sum_{x_H} p(x_V, x_H ).  
\end{equation}    
The image of the model ${\mathcal M}$, that is $\pi_V({\mathcal M})$, is denoted by ${\mathcal M}_V$. It consists of all probability distributions 
that can be generated by the learning system. With the parametrization (\ref{parametr}), we can parametrize ${\mathcal M}_V$ in terms of 
\[ 
    \pi_V\circ \varphi: \; \Theta \; \rightarrow \;  {\mathcal M}_V \subseteq {\mathcal P}_V, \qquad  \theta \; \mapsto \; \pi_V(p_\theta). 
\]
In this article, we will mostly omit the parameter and simply write $p \in {\mathcal M}$ and $p \in {\mathcal M}_V$, respectively.
\medskip

Now consider a target distribution $p^\ast \in {\mathcal P}_V$. The objective of learning is to find $p \in {\mathcal M}_V$ that is close to $p^\ast$.
To achieve that, we minimize the KL-divergence of  
$p^\ast$ from a distribution $p \in {\mathcal M}_V$, that is, 
\begin{equation}
    D(p^\ast \| p ) 
       \; := \; \sum_{{x}_V} p^\ast({x}_V) \ln \frac{p^\ast({x}_V)}{p({x}_V)}.   \label{lo}
\end{equation}  
Throughout this article, we refer to this function as a primary objective function defined on ${\mathcal M}_V$ and therefore only involving visible units. This will be compared with corresponding lifted objective functions defined on ${\mathcal M}$ which involve the visible as well as the hidden units.
Observe that the minimization of $D(p^\ast \| \cdot)$
is equivalent to the minimization of the 
{\em cross entropy\/}
\begin{equation*} 
   - \sum_{x_V} p^\ast(x_V) \ln p(x_V)
\end{equation*}
because these two functions differ only by a constant, the {\em entropy\/} of $p^\ast$, which is given by 
\begin{equation*}
   - \sum_{x_V} p^\ast(x_V) 
    \ln p^\ast(x_V). 
\end{equation*}
The cross entropy is nothing but the mean value of the {\em surprise\/}, $ - \ln p(x_V)$. Alternatively, we can change the sign of the surprise and consider the {\em evidence\/}, $\ln p(x_V)$, leading to the 
mean value         
\begin{equation} \label{expectedevidence}
    {\rm EVIDENCE}(p) \; := \; \sum_{x_V} p^\ast(x_V) \ln p(x_V),
\end{equation}
which we also refer to as the {\em evidence\/} without explicitly highlighting the fact that it is an integrated quantity. Minimizing the KL-divergence (\ref{lo}) is then equivalent to maximizing the evidence (\ref{expectedevidence}).  
In order to be tractable, we bound the evidence from below by 
considering the set $H$ of hidden units. For any conditional probability measure $q(x_H \lvert x_V)$ and  
$p \in {\mathcal M}$, 
we then have 
\begin{eqnarray} 
{\rm EVIDENCE}(\pi_V(p))  & = & - \sum_{x_V, x_H} p^{\ast}(x_V) q(x_H | x_V) \ln \frac{q( x_H | x_V)}{p( x_V , x_H)} \nonumber \\  
    & &  
    + \sum_{x_V} p^\ast(x_V) \sum_{x_H} q(x_H | x_V) 
    \ln \frac{q(x_H | x_V)}{p(x_H | x_V)} 
          \label{kldiverg} \\  
    & \geq &  - \sum_{x_V,x_H} p^{\ast}(x_V) q(x_H | x_V) \ln \frac{q( x_H | x_V)}{p( x_V , x_H)} \label{inequ} \\  
     & =: & {\rm ELBO}(q,p).    \label{upperbound2}
\end{eqnarray}
The inequality (\ref{inequ}) follows from the non-negativity of the KL-divergences between the conditional probability distributions $q(\cdot | x_V)$ and $p(\cdot | x_V)$ in  (\ref{kldiverg}). 
The bound (\ref{upperbound2}) is referred to as the {\em evidence lower bound\/}. 
It coincides with the negative of the {\em variational free energy\/}. The importance of this quantity has been highlighted in the introduction. 
The evidence lower bound gives rise to the function 
\[
    {\rm ELBO}(q, \cdot): {\mathcal M} \to 
{\mathbb R}, \qquad p \mapsto 
    {\rm ELBO}(q,p).
\]
Replacing the evidence by the evidence lower bound implies a number of simplifications of the optimization in terms of gradient methods. One instance of these simplifications will be outlined in some more detail in Section \ref{bayesnet}. But how much do we alter the original optimization problem by this replacement? 
To get a first intuition, observe that the gap between the evidence and its lower bound is given by the mean  value (\ref{kldiverg}) of KL-divergences,  
\begin{equation} \label{gap}
    {\rm GAP}(q,p) \; := \;  
    \sum_{x_V, x_H} p^\ast(x_V) q(x_H | x_V) 
    \ln \frac{q(x_H | x_V)}{p(x_H | x_V)} .
\end{equation}
In summary, we have the following relationship between the introduced quantities:
\begin{equation*}
   {\rm EVIDENCE}(\pi_V(p)) \; = \; 
   {\rm ELBO}(q,p) + {\rm GAP}(q, p).
\end{equation*}
In Sections \ref{extendedpr} and \ref{natelbo}, we shall provide arguments supporting the hypothesis that the gap does not play a major role in learning. The main target of this article is to compare the natural gradient of ${\rm ELBO}(q, \cdot)$ on ${\mathcal M}$ with the natural gradient of 
${\rm EVIDENCE}$ or, equivalently, the objective function $D(p^\ast \| \cdot)$ on ${\mathcal M}_V$.  
In order to imply the same learning process based on the natural gradient method, the respective gradients should be consistent in a sense that we are going to specify. To reveal a condition for such a consistency, we are going to interpret the derivations of this section in a more geometric way. Before coming to this, we first review some information-geometric preliminaries.       
     
\section{Information-Geometric Preliminaries} \label{basic}    
The set ${\mathcal P}$ of strictly positive probability distributions on some 
finite set $\mathsf{X}$ of states $x$ represents the most basic example of a model within information geometry. 
We write a point $p \in {\mathcal P}$ as
\begin{eqnarray}\label{coordin}
    p \, =\, \sum_{x} p(x) \, \delta^x,
\end{eqnarray}
where $\delta^x$ denotes the Dirac measure concentrated in $x$. 
The tangent space of ${\mathcal P}$ in $p$ is given by 
\[
     T_p {\mathcal P} \; = \; \left\{ A = \sum_{x} A(x) \, \delta^x \; : \;  \sum_{x} A(x) = 0 \right\}.
\] 
For two vectors $A,B \in  T_p {\mathcal P}$, we have the {\em Fisher-Rao metric\/} 
\begin{equation} \label{FRmetric}
      g^{\rm FR}_p(A,B) \; = \; \sum_{x} 
      \frac{1}{p(x)} A(x) B(x),  
\end{equation} 
which is a Riemannian metric on ${\mathcal P}$. (Throughout this article, we also write $\langle A, B \rangle$ if there is no ambiguity regarding the Riemannian metric and the base point.)     
Furthermore, we consider the {\em Kullback-Leibler divergence\/} ({\em KL-divergence\/}) which is 
defined on ${\mathcal P} \times {\mathcal P}$ by  
\begin{equation}
   \label{KLD}
   D(q \|  p) 
   \; = \;  
   \sum_{x} q(x) \ln \frac{q(x)}{p(x)}. 
\end{equation}
Note that we already used the KL-divergence to define the primary objective function (\ref{lo}).
We can express the Fisher-Rao gradients of the KL-divergence in both arguments: 
\begin{eqnarray} 
    {\rm grad}_p D(q \| \cdot ) & = &
    \sum_{x} ( p (x) - q(x)) \, \delta^x 
    \nonumber 
    \\ 
    & = & p - q \; \in \; T_p {\mathcal P} ,\label{gradl} \\
    {\rm grad}_q D(\cdot \| p ) & = &
    \sum_{x} q(x) \left( \ln \frac{q(x)}{p(x)} - 
    \sum_{x'} q(x') \left( \ln \frac{q(x')}{p(x')} \right) \right) \delta^x \nonumber \\ 
    & = & q \left( \ln \frac{q}{p} - {\mathbb E}_q\left( \ln \frac{q}{p} \right) \right) \; \in \;  T_q {\mathcal P} . \nonumber 
\end{eqnarray}  
These gradients satisfy the defining condition \eqref{eq:nat_grad_defn}, where $\mathcal{L}$ is the KL-divergence 
(\ref{KLD})
in the first and the second argument, respectively, and $g=g^{\rm FR}$ as defined by (\ref{FRmetric}). For more details, see \citep{ayamari2015, ay2017information}.
\medskip 

Now we consider the marginalization map $\pi_V: {\mathcal P} \to {\mathcal P}_V$, defined in terms of (\ref{margina}). 
In order to relate tangent vectors in 
$T_p {\mathcal P}$ to tangent vectors in 
$T_{{\pi_V}(p)} {\mathcal P}_V$, we consider the differential 
\[
    d \pi_V : T_p{\mathcal P} \to T_{{\pi_V}(p)} {\mathcal P}_V, 
\]
given by 
\begin{equation} \label{projection} 
   d\pi_V (A)(x_V) \; = \; \sum_{x_H} A(x_V,x_H). 
\end{equation} 
Furthermore, we introduce the following orthogonal spaces:
\[
    {\mathcal V}_p := {\rm ker} \, d \pi_V, \qquad  
    {\mathcal H}_p :=  {{\mathcal V}_p}^{\perp} , 
\]
where the orthogonal complement in the definition of ${\mathcal H}_p$ is meant to be with respect to the Fisher-Rao metric in $p \in {\mathcal P}$. 
We refer to ${\mathcal V}_p$ as the {\em vertical space\/} and to ${\mathcal H}_p$ as the {\em horizontal space\/} in $p$, which is in line with the differential-geometric terminology. This should not be confused with the symbols $V$ and $H$ for the visible and hidden units, respectively. In fact, by an unfortunate coincidence, the latter meaning of the symbols might even suggest the opposite naming. More precisely, the tangent space of ${\mathcal P}_V$ can be identified with the horizontal space ${\mathcal H}_p$ and not, as the symbol $V$ in ${\mathcal P}_V$ might suggest, with the vertical space.
Clearly, we have the orthogonal decomposition
\[
    T_p {\mathcal P} \; = \; {\mathcal H}_p \oplus {\mathcal V}_p \, . 
\]   
Every vector $A$ in $T_p {\mathcal P}$ has a unique representation as 
\[
     A \; = \; A^{\mathcal H} +  A^{\mathcal V} , 
\]
where $A^{\mathcal H} \in {\mathcal H}_p$ and $A^{\mathcal V} \in {\mathcal V}_p$. 
\medskip 

We now consider a model ${\mathcal M}$ in ${\mathcal P}$ and its $\pi_V$-image ${\mathcal M}_V$ and thereby restrict attention to non-singular points. A point $p \in {\mathcal M}$ is {\em admissible\/} if $p$ and $\pi_V(p)$ are non-singular points of ${\mathcal M}$ and ${\mathcal M}_V$, respectively, and  
$d\pi_V (T_p {\mathcal M}) = T_{\pi_V(p)} {\mathcal M}_V$. Admissible points allow us to locally define the geometric structures that are relevant from the perspective of information geometry. In particular,  
the model ${\mathcal M}$ carries the induced geometry of ${\mathcal P}$ in an admissible point $p$, and 
${\mathcal M}_V$ carries the corresponding induced geometry of ${\mathcal P}_V$ in $\pi_V(p)$. This will allow us to consider the gradient on ${\mathcal M}$, denoted by ${\rm grad}^{\mathcal M}$, and the gradient on ${\mathcal M}_V$, denoted by ${\rm grad}^{{\mathcal M}_V}$.   
\medskip

The objective of learning can be formulated  as the optimization of a differentiable function ${\mathcal L}: {\mathcal P}_V \to {\Bbb R}$ on ${\mathcal M}_V$ which plays the role of a primary objective function. Examples are given by the KL-divergence (\ref{lo}), which we should minimize, and 
the mean evidence (\ref{expectedevidence}), which we should maximize. In what follows, we will mainly refer to the case of minimizing ${\mathcal L}$ on ${\mathcal M}_V$ by means of the gradient descent method. Alternatively, one could also minimize the corresponding lifted function ${\mathcal L} \circ \pi_V$ defined on ${\mathcal M}$. 
More precisely, consider a curve $\gamma$ in ${\mathcal M}$ that solves the differential equation 
\begin{equation*} 
\dot{\gamma}(t) \; = \; -
{\rm grad}_{\gamma(t)}^{\mathcal M} ({\mathcal L}\circ \pi_V ) , \qquad \gamma(0) = p ,
\end{equation*}
where we assume that all points $\gamma(t)$ are admissible and $\dot{\gamma}(0) \not= 0$. (Throughout this article, we assume the existence and uniqueness of maximal solutions of differential equations without explicitly stating the conditions for this to hold.) 
Furthermore, let $\sigma := \pi_V \circ \gamma$ be the projected curve in ${\mathcal M}_V$. The change of ${\mathcal L}$ along $\sigma$ is then given by:
\[
  \frac{d}{dt} \, {\mathcal L} (\sigma(t)) \; = \; 
  \frac{d}{dt} \, {\mathcal L} (\pi_V(\gamma(t))) 
  \; = \;  
   \frac{d}{dt} \, ({\mathcal L}\circ \pi_V)(\gamma(t)) 
   \; < \; 0.
\]
This shows that the minimization of the lifted function ${\mathcal L} \circ \pi_V$ on ${\mathcal M}$ provides a useful strategy for minimizing the primary objective function ${\mathcal L}$ on ${\mathcal M}_V$. However,
even though ${\mathcal L}$ is decreasing along $\sigma$, it will typically not be following minus the gradient of ${\mathcal L}$ on ${\mathcal M}_V$.
From the chain rule we have in general that
\begin{eqnarray}
    \dot{\sigma}(t) &=& d \pi_V \left(
    \dot{\gamma}(t)
    \right) \nonumber \\
    &=& -d \pi_V \left( {\rm grad}_{\gamma(t)}^{\mathcal M} ({\mathcal L}\circ \pi_V ) \right). \label{firstfield}
\end{eqnarray}
Following the Fisher-Rao gradient on ${\mathcal M}_V$, however, would require 
\begin{equation}
    \dot{\sigma}(t) \; = \; 
    - {\rm grad}^{{\mathcal M}_V}_{\sigma(t)} {\mathcal L}. \label{secondfield} 
\end{equation}
The vector fields defined by the respective RHS of (\ref{firstfield}) and (\ref{secondfield}) are typically different, but they point, at least, in a similar direction, which is shown in the following proposition.
\medskip

\begin{proposition} \label{samedir}
Let ${\mathcal M}$ be a model in $\cP$, let ${\mathcal L}: {\mathcal P}_V \to {\mathbb R}$ be a differentiable objective function, and let $p \in {\mathcal M}$ be an admissible point. Then,
\[
   {\rm grad}^{\mathcal M}_p \left( {\mathcal L} \circ \pi_V\right) \; = \; 0
   \quad \Leftrightarrow \quad
   {\rm grad}^{{\mathcal M}_V}_{\pi_V(p)} {\mathcal L} \; = \; 0. 
\]
Furthermore, if one of the two gradients does not vanish, we have 
\[
\left\langle d\pi_V \left( {\rm grad}^{\mathcal M}_p  
\left( {\mathcal L} \circ \pi_V \right)\right), {\rm grad}^{{\mathcal M}_V}_{\pi_V(p)}  {\mathcal L} \right\rangle 
\; > \; 0.
\]
\end{proposition}
\begin{proof}
For an arbitrary $A \in T_p {\mathcal M}$, we have 
\begin{eqnarray*}
 \left\langle {\rm grad}^{\mathcal M}_p \left( {\mathcal L} \circ \pi_V\right), A \right\rangle & = & 
 d\left( {\mathcal L} \circ \pi_V \right)_p (A) \\
 & = & \left( d {\mathcal L}_{\pi_V(p)} \circ d\pi_V \right) (A) \\
 & = & d {\mathcal L}_{\pi_V(p)} \left( d\pi_V (A) \right) \\
 & = & \left\langle {\rm grad}^{{\mathcal M}_V}_{\pi_V(p)} {\mathcal L} , d \pi_V (A) \right\rangle . 
\end{eqnarray*}
This implies that ${\rm grad}^{\mathcal M}_p \left( {\mathcal L} \circ \pi_V\right)$ vanishes if and only if ${\rm grad}^{{\mathcal M}_V}_{\pi_V(p)} {\mathcal L}$ vanishes (note that the $d \pi_V(A)$, $A \in T_p {\mathcal M}$, span the tangent space $T_{\pi_V(p)} \cM_V$ because $p$ is assumed to be admissible). Furthermore, for the special case $A = {\rm grad}^{\mathcal M}_p \left( {\mathcal L} \circ \pi_V\right) \not= 0$, we obtain
\begin{eqnarray*}
\left\langle d\pi_V \left( {\rm grad}^{\mathcal M}_p  
\left( {\mathcal L} \circ \pi_V \right)\right), {\rm grad}^{{\mathcal M}_V}_{\pi_V(p)}  {\mathcal L} \right\rangle 
& = & \left\langle {\rm grad}^{\mathcal M}_p\left({\mathcal L}\circ \pi_V \right), {\rm grad}^{\mathcal M}_p\left({\mathcal L}\circ \pi_V \right)\right\rangle \\
& > & 0.
\end{eqnarray*}
\end{proof}
\medskip

We now ask the question under which conditions the two gradient fields of Proposition 
\ref{samedir} are not only pointing in a similar direction but are actually equal.  
The following definition specifies the models ${\mathcal M}$ for which this is satisfied for any objective function ${\mathcal L}: {\mathcal M} \to {\Bbb R}$, as stated in Theorem~\ref{therprev}. For such models, a projected solution curve $\sigma$ in ${\mathcal M}_V$ satisfies equation (\ref{secondfield}).  

\begin{definition}[Definition 1 of \cite{ay2020locality}] \label{defcyl}
We call a model ${\mathcal M} \subseteq {\mathcal P}$ {\em cylindrical\/} in a non-singular point $p \in {\mathcal M}$, if 
\begin{equation*}
    T_p {\mathcal M} \; = \;  (T_p {\mathcal M} \cap {\mathcal H}_p) \oplus (T_p {\mathcal M} \cap {\mathcal V}_p) .
\end{equation*}
If the model is 
cylindrical in all non-singular points $p \in {\mathcal M}$ then we call it {\em (pointwise)\/} {\em cylindrical}. 
\end{definition} 

\begin{figure}[ht]
\centering
\includegraphics[width=70mm]{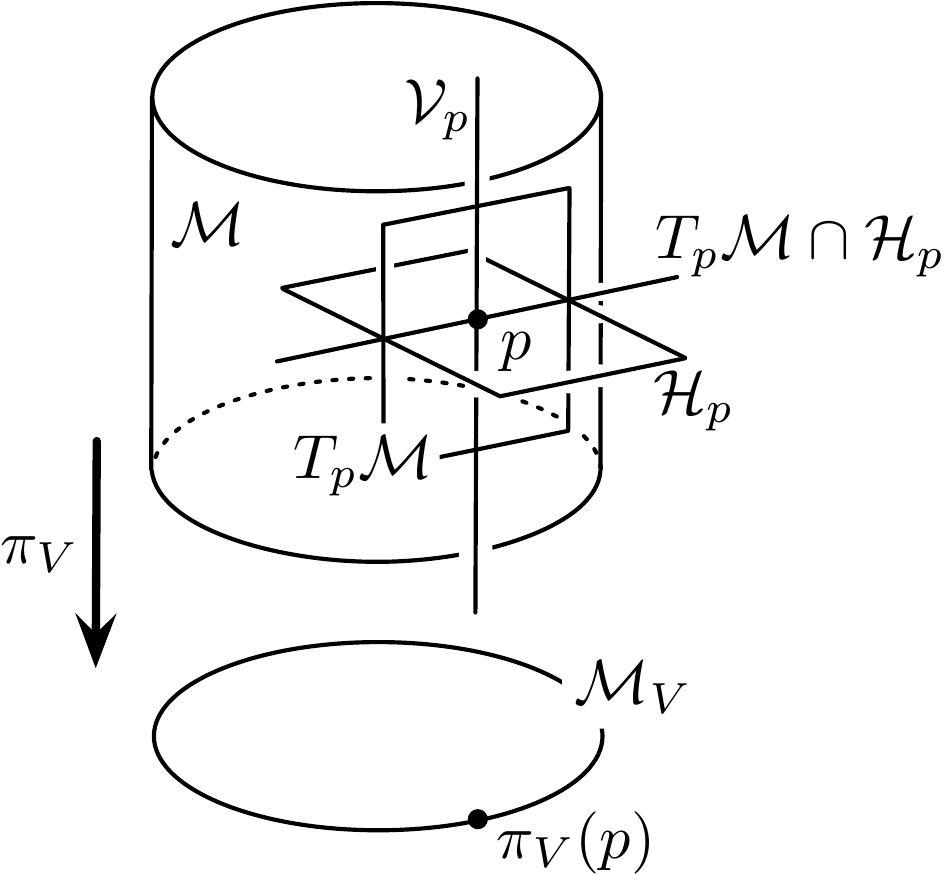}
\caption{Illustration of a cylindrical model ${\mathcal M}$ in terms of a cylinder, the Cartesian product of a circle with a finite interval. The tangent space 
$T_p {\mathcal M}$ equals the sum of its intersections with ${\mathcal H}_p$ and ${\mathcal V}_p$.}
\label{fig:CylindricalModel1}
\end{figure}             

See Figure \ref{fig:CylindricalModel1} for an illustration of a cylindrical model and Appendix \ref{sec:examples} for examples of cylindrical and non-cylindrical models.
It has been shown in \cite[Theorem 3]{ay2020locality} that a model ${\mathcal M}$ is cylindrical if and only if for the restriction $\pi_V |_{\mathcal M}: {\mathcal M} \to {\mathcal M}_V$ the following holds: Given $A,B \in \left( {\rm ker} \, d\pi_V|_{\mathcal M} \right)^\perp$, we have   
\begin{equation} \label{submersion} 
   g_p^{\rm FR}(A,B) \, = \, 
   g_{\pi_V(p)}^{\rm FR}\left( d\pi_V (A), d\pi_V(B) \right) ,
\end{equation}
whenever $p$ is {\em admissible\/}.  
The equality (\ref{submersion}) is central in the definition of a Riemannian submersion. 
The property of ${\mathcal M}$ being cylindrical ensures the invariance of the natural gradient, as stated in the following theorem. (This is different from the often stated invariance of the natural gradient under coordinate transformations, elaborated on in \citep{van2023invariance}.) 

\begin{theorem}[Theorem 5 of \cite{ay2020locality}] \label{therprev}
Let ${\mathcal M}$ be a cylindrical model, let ${\mathcal L}: {\mathcal M}_V \to {\mathbb R}$ be a differentiable objective function, and let $p \in {\mathcal M}$ be an admissible point. Then,
\begin{equation} \label{gradcyl}
   d \pi_V \left( {\rm grad}^{\mathcal M}_p ({\mathcal L}\circ \pi_V) \right) \; = \; {\rm grad}^{{\mathcal M}_V}_{\pi_V(p)} {\mathcal L}.
\end{equation}   
\end{theorem}
\medskip

Note that the gradient on the LHS of (\ref{gradcyl}) refers to the Fisher-Rao metric on ${\mathcal M} \subseteq {\mathcal P} = {\mathcal P}_{V,H}$, 
whereas the RHS refers to the Fisher-Rao metric on ${\mathcal M}_V \subseteq {\mathcal P}_V$. The invariance of the gradient as formulated in  
Theorem \ref{therprev} is quite special and holds only for the Fisher-Rao metric and cylindrical models (see \citep{ay2020locality} for further details). Our main example of a cylindrical model will be the full model ${\mathcal M} = {\mathcal P}$. 
More precisely, all points $p$ are non-singular and we obviously have $T_p {\mathcal P} = {\mathcal H}_p \oplus {\mathcal V}_p=(T_p {\mathcal P} \cap {\mathcal H}_p) \oplus (T_p {\mathcal P} \cap {\mathcal V}_p)$.
This example will provide the setting in which information-geometric quantities are studied in the absence of constraints through a lower-dimensional model. Clearly, when dealing with learning systems, we typically do have constraints. 
By relating this typical situation to the situation without constraints we are able to reveal the geometric effect of these constraints.    
\medskip 

We conclude this section with a simple statement about the orthogonal projection onto 
the tangent space of a cylindrical model.  

\begin{lemma} \label{invcylind}
Let ${\mathcal M}$ be a cylindrical model in ${\mathcal P}$, let $p$ be a non-singular point of ${\mathcal M}$, and let $\Pi_p$ denote the orthogonal projection of $T_p {\mathcal P}$ onto $T_p {\mathcal M}$. 
Then,  
\begin{equation*}
   \Pi_p ({\mathcal H}_p) 
   \; \subseteq \; {\mathcal H}_p, \qquad 
    \Pi_p ({\mathcal V}_p) 
    \; \subseteq \; {\mathcal V}_p
\end{equation*}
\end{lemma}
\begin{proof} Definition \ref{defcyl} implies the following orthogonal decomposition:
\begin{eqnarray*} 
   T_p{\mathcal P} & = & 
   T_p {\mathcal M} \oplus 
   \left(T_p {\mathcal M} \right)^\perp \\
   & = &  (T_p {\mathcal M} \cap {\mathcal H}_p) \oplus (T_p {\mathcal M} \cap {\mathcal V}_p) \oplus 
   \left(T_p {\mathcal M} \right)^\perp.
\end{eqnarray*}
Thus, every vector $X \in T_p{\mathcal P}$ has a unique orthogonal decomposition as $X = A + B + C$, where $A \in (T_p {\mathcal M} \cap {\mathcal H}_p)$, $B \in (T_p {\mathcal M} \cap {\mathcal V}_p)$, and $C \in \left(T_p {\mathcal M} \right)^\perp$. With this decomposition, we have $\Pi_p(A) = A$, $\Pi_p(B) = B$, and $\Pi_p(C) = 0$. Now, 
if $X\in {\mathcal H}_p$ then its $B$ component vanishes, so that 
$\Pi_p(X) = \Pi_p(A + C) = \Pi_p(A) + \Pi_p(C) = A \in {\mathcal H}_p$. If, on the other hand, $X \in {\mathcal V}_p$ then its $A$ component vanishes, so that $\Pi_p(X) = \Pi_p(B + C) = \Pi_p(B) + \Pi_p(C) = B \in {\mathcal V}_p$. 
\end{proof}

\section{The Extended Problem with Hidden Units} \label{extendedpr}
In this section, we are going to relate the minimization of the KL-divergence (\ref{lo}), ${\mathcal L} := D(p^\ast \| \cdot)$, on ${\mathcal M}_V$ to the minimization of the lifted function 
${\mathcal L} \circ \pi_V$ on ${\mathcal M}$. 
In general, it is difficult to minimize 
${\mathcal L}$.
In particular, we face here various challenges when trying to apply the natural gradient descent method.
On the one hand, ${\mathcal M}_V$ will typically have singularities so that gradients cannot be evaluated in these points. 
On the other hand, even for non-singular points the Fisher-Rao metric will be difficult to evaluate if we do not assume ${\mathcal M}_V$ to have a particularly simple structure. To be more concrete, we first evaluate the gradient of $D(p^\ast \| \cdot )$, considered as a function on 
${\mathcal P}_V$ (see 
equation (\ref{gradl})):
\begin{equation} \label{unconst}
     {\rm grad}^{{\mathcal P}_V}_p D(p^\ast \| \cdot ) \, = \, p - p^\ast \, \in \, T_p {\mathcal P}_V. 
\end{equation}
For the gradient on the model ${\mathcal M}_V$, we then have to project the gradient (\ref{unconst}) in $p$ onto the tangent space 
$T_p {\mathcal M}_V$, thereby assuming that $p$ is a non-singular point of ${\mathcal M}_V$. This leads to  
\begin{equation} \label{const}
     {\rm grad}^{{\mathcal M}_V}_p D(p^\ast \| \cdot ) \, = \, \Pi_p(p - p^\ast) \, \in \, T_p {\mathcal M}_V,  
\end{equation}
where $\Pi_p$ denotes the orthogonal projection onto the tangent space $T_p{\mathcal M}_V$. Note that the projected vector $\Pi_p(p - p^\ast)$ does not have to be particularly simple, even though the difference vector $p - p^\ast$, the gradient in the ambient space, is simple.   
\medskip 

We are now going to modify the problem of minimizing the KL-divergence (\ref{lo}) in several simplifying steps, thereby tracing the geometric implication of each individual step. The overall aim of this modification is to relate the minimization of (\ref{lo}), or equivalently the maximization of the evidence, to the corresponding maximization of the evidence lower bound, which will be finally addressed in Section \ref{natelbo}.  
\medskip 

It is well-known that the minimization of the KL-divergence (\ref{lo}) can be simplified by extending the problem to the 
space of probability distributions on joint states $(x_V, x_H)$ that is ${\mathcal P}_{V,H}$    
(see \cite{amari2016information}, Chapter 8). 
For that, we consider the so-called {\em data manifold\/} 
\begin{equation*} \label{datamanifold}
   {\mathcal Q} \, := \, \left\{ q \in {\mathcal P}_{V , H}  \; : \; \pi_V(q) = p^\ast \right\}.  
\end{equation*}
Note that the symbol $q$ here denotes a joint probability distribution whereas previously we have used the same symbol for the conditional probability distribution. The relation is given by $q(x_V, x_H) = p^\ast(x_V) q(x_H | x_V)$. Thus, even though it is not visible at first sight, the data manifold ${\mathcal Q}$ incorporates the data distribution $p^\ast$.
With the monotonicity of the KL-divergence, we obtain for any $p \in {\mathcal M}$ and $q \in {\mathcal Q}$ 
\begin{eqnarray*}
  ({\mathcal L} \circ \pi_{V}) (p) 
      & = & D(p^\ast  \|  \pi_V(p)) \nonumber \\ 
      & = & D(\pi_V(q) \| \pi_V(p)) \nonumber \\
      & \leq & D(q  \|  p) ,  \label{upperbound}
\end{eqnarray*}
where equality holds for 
$q = \pi_{\mathcal Q}(p)$ defined by 
\begin{equation} \label{defproj}
   \pi_{\mathcal Q}(p)(x_V, x_H)  \; = \; p^\ast(x_V) p(x_H | x_V).  
\end{equation}
Thus, we have
\begin{eqnarray}
    ({\mathcal L} \circ \pi_{V}) (p) 
       & = &  D(\pi_{\mathcal Q}(p) \| p) \nonumber  \\
        & = & \inf_{q \in {\mathcal Q}} D(q  \|  p) \nonumber \\
       & =: & D({\mathcal Q} \| p ) .  \nonumber 
\end{eqnarray}
Clearly, a point $\hat{p}$ minimizes  
${\mathcal L}\circ \pi_V = 
D( {\mathcal Q} \| \cdot )$ in  
${\mathcal M}$ if and only if 
$\pi_V(\hat{p})$ minimises 
${\mathcal L} = D(p^\ast \| \cdot)$ in 
${\mathcal M}_V$. However, there are 
important differences between the 
corresponding optimizations in terms of the natural gradient method. On the one hand, ${\mathcal M}$ typically comes with a geometric structure that simplifies the optimization of ${\mathcal L}\circ \pi_V$.  On the other hand, for the optimization of ${\mathcal L}$ 
it is natural to use the Fisher-Rao 
metric on ${\mathcal M}_V$, whereas 
${\mathcal L}\circ \pi_V$ is defined 
on ${\mathcal M}$ and should be 
optimized with respect to the 
corresponding Fisher-Rao gradient on ${\mathcal M}$.
In general, the two 
ways to optimize basically 
the same function will not be 
equivalent. 
However, according to 
Theorem \ref{therprev}, they will be 
equivalent whenever the model 
${\mathcal M}$ is cylindrical. 

\begin{theorem} \label{graddeppr}
{\bf (a)} Consider first the function $D({\mathcal Q} \| \cdot )$ on ${\mathcal P}$. Then  
\begin{equation} \label{gradientinp}
     {\rm grad}^{\mathcal P}_p  D({\mathcal Q} \| \cdot ) \; = \; p - \pi_{\mathcal Q}(p), 
\end{equation}
where $\pi_{\mathcal Q}(p)$ is defined by (\ref{defproj}).
In order to obtain the gradient of 
$D({\mathcal Q} \| \cdot )$ in a non-singular point $p \in {\mathcal M}$, we have to project (\ref{gradientinp}) onto $T_p {\mathcal M}$, that is 
\begin{equation} \label{gradientOnModel}
     {\rm grad}^{\mathcal M}_p  D({\mathcal Q} \| \cdot )  \; = \; \Pi_p (p - \pi_{\mathcal Q}(p)), 
\end{equation}
where $\Pi_p$ denotes the orthogonal projection $T_p{\mathcal P} \to T_p {\mathcal M}$ with respect to the Fisher-Rao metric on ${\mathcal P}$.
\smallskip

\noindent
{\bf (b)} If ${\mathcal M}$ is cylindrical and $p \in {\mathcal M}$ admissible then 
\begin{equation} \label{invariancedist}
  d\pi_V \left( {\rm grad}^{\mathcal M}_p  D({\mathcal Q} \| \cdot ) \right) 
  \; = \; {\rm grad}^{{\mathcal M}_V}_{\pi_V(p)}  D(p^\ast \| \cdot ).
\end{equation} 
In particular, the equality (\ref{invariancedist}) holds in all points of the maximal model ${\mathcal M} = {\mathcal P}$ where ${\mathcal M}_V = {\mathcal P}_V$. 
\end{theorem}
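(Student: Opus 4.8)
The plan is to establish part (a) by a direct computation of the Fisher-Rao gradient and then to obtain part (b) as an immediate consequence of Theorem~\ref{therprev}.

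For part (a), I would first exploit the explicit form of the minimiser. Substituting $\pi_{\mathcal Q}(p)(x_V,x_H) = p^\ast(x_V)\, p(x_H \mid x_V)$ from (\ref{defproj}) into $D(\pi_{\mathcal Q}(p)\|p)$ and using $p(x_H \mid x_V) = p(x_V,x_H)/p(x_V)$, the conditional factors cancel and one finds
\[
   D({\mathcal Q}\|p) \; = \; D(\pi_{\mathcal Q}(p)\|p) \; = \; - \sum_{x_V} p^\ast(x_V)\ln p(x_V) + \text{const},
\]
so the function depends on $p$ only through its visible marginal $\pi_V(p)$. I would then compute the directional derivative along an arbitrary $A \in T_p{\mathcal P}$, using that the induced variation of the marginal is $d\pi_V(A)$, and match the result against the Fisher-Rao inner product $g^{\rm FR}_p(p - \pi_{\mathcal Q}(p), A)$. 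Writing $p(x_V,x_H) - \pi_{\mathcal Q}(p)(x_V,x_H) = p(x_V,x_H)\,(1 - p^\ast(x_V)/p(x_V))$, the term $\sum_{x_V,x_H} A(x_V,x_H)$ stemming from the ``$1$'' vanishes because $A$ is tangent to ${\mathcal P}$, and the remaining term reproduces the directional derivative exactly; by the defining property of the gradient this yields (\ref{gradientinp}). The identity (\ref{gradientOnModel}) is then the standard fact that the Riemannian gradient of a function restricted to a submanifold is the orthogonal projection of the ambient gradient onto the tangent space.

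Conceptually, the same conclusion can be read off from an envelope (Danskin) argument: when differentiating the value function $\inf_{q\in{\mathcal Q}} D(q\|p)$ at $p$, the contribution coming from the $p$-dependence of the minimiser $q = \pi_{\mathcal Q}(p)$ drops out, since $\dot q$ is tangent to ${\mathcal Q}$ while first-order optimality makes ${\rm grad}_q D(\cdot\|p)$ orthogonal to $T_q{\mathcal Q}$ there; what survives is precisely the partial gradient ${\rm grad}_p D(q\|\cdot)\big|_{q=\pi_{\mathcal Q}(p)} = p - \pi_{\mathcal Q}(p)$. I expect this interplay --- justifying that the implicit dependence of $\pi_{\mathcal Q}(p)$ on $p$ may be ignored --- to be the only genuinely delicate point; the direct computation above sidesteps it cleanly and is the route I would present.

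For part (b), I would set ${\mathcal L} := D(p^\ast\|\cdot)$ on ${\mathcal M}_V$ and invoke the identity already derived in Section~\ref{extendedpr}, namely $D({\mathcal Q}\|p) = D(p^\ast\|\pi_V(p)) = ({\mathcal L}\circ\pi_V)(p)$. Hence ${\rm grad}^{\mathcal M}_p D({\mathcal Q}\|\cdot) = {\rm grad}^{\mathcal M}_p({\mathcal L}\circ\pi_V)$, and since ${\mathcal M}$ is cylindrical with $d\pi_V(T_p{\mathcal M}) = T_{\pi_V(p)}{\mathcal M}_V$ at the assumed non-singular points, Theorem~\ref{therprev} applies verbatim and gives $d\pi_V\big({\rm grad}^{\mathcal M}_p({\mathcal L}\circ\pi_V)\big) = {\rm grad}^{{\mathcal M}_V}_{\pi_V(p)}{\mathcal L}$, which is exactly (\ref{invariancedist}). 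The special case ${\mathcal M} = {\mathcal P}$ follows because ${\mathcal P}$ is cylindrical --- every point is non-singular and $T_p{\mathcal P} = {\mathcal H}_p \oplus {\mathcal V}_p$ --- as noted after Theorem~\ref{therprev}.
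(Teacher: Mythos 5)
Your proposal is correct and takes essentially the same route as the paper's proof: both reduce $D({\mathcal Q}\,\|\,\cdot)$ to $D(p^\ast \| \pi_V(\cdot))$ (the cross entropy up to a constant), compute the Fisher-Rao gradient on ${\mathcal P}$ directly --- the paper via the coordinate formula for the natural gradient on the simplex, you by verifying the defining inner-product identity $g^{\rm FR}_p({\rm grad},A) = \partial_A D$, which is the same calculation --- then obtain (\ref{gradientOnModel}) by orthogonal projection and deduce part (b) by applying Theorem \ref{therprev} to ${\mathcal L} = D(p^\ast\|\cdot)$. Your observation that passing to the explicit cross-entropy form sidesteps the envelope/Danskin issue about the $p$-dependence of $\pi_{\mathcal Q}(p)$ matches the paper's strategy exactly.
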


\begin{proof} We know that 
$D({\mathcal Q} \| \cdot) = D(p^\ast \| \pi_V(\cdot))$, 
which is a function of $p$ or, equivalently, a function of its coordinates $p(x_V, x_H)$ with respect to the basis vectors $\delta^{x_V, x_H}$ (see equation (\ref{coordin})). We evaluate the partial derivatives with respect to these coordinates,
\begin{equation*} 
   \frac{\partial}{\partial p({x}_V,x_H)} \, D(p^\ast \| \pi_V(\cdot)) \; = \; 
   - \frac{p^\ast(x_V)}{p(x_V)}, 
\end{equation*} 
and obtain for the $(x_V,x_H)$-component of the natural gradient (see \citep{ay2017information}, Proposition 2.2)
\begin{eqnarray*}
\lefteqn{
\left({\rm grad}^{\mathcal P}_p  D({\mathcal Q} \| \cdot )\right)(x_V, x_H)
   } \\
 & = & p(x_V, x_H) 
 \left( - \frac{p^\ast(x_V)}{p(x_V)} + 
 \sum_{x_V', x_H'}  p(x'_V, x'_H)  
 \frac{p^\ast(x_V)}{p(x_V)} \right)  \\
 & = & p(x_V, x_H) 
 \left( - \frac{p^\ast(x_V)}{p(x_V)} + 
 1 \right)  \\
 & = & p(x_V , x_H) - p(x_H | x_V) p^\ast (x_V) \\
 & = & p(x_V , x_H) - \pi_{\mathcal Q}(p)(x_V, x_H).
\end{eqnarray*}
This proves equation (\ref{gradientinp}), and equation (\ref{gradientOnModel}) follows immediately from that. Finally, the invariance (\ref{invariancedist}) is a direct consequence of Theorem \ref{therprev}. 
\end{proof}

\begin{figure}[ht]
\begin{center}
           \includegraphics[width=70mm]{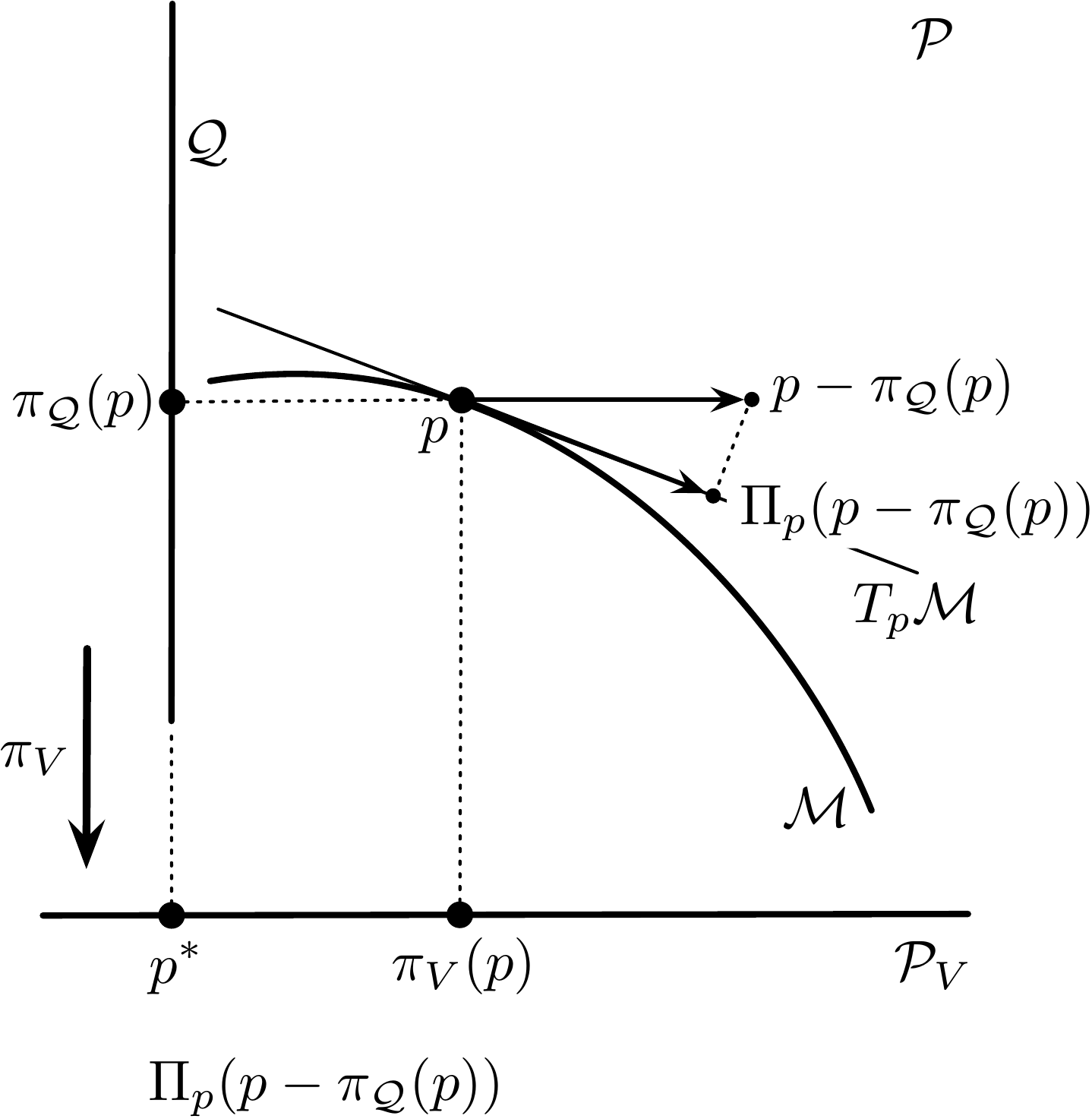}
\caption{Illustration of the gradients considered in Theorem \ref{graddeppr}.} 
\label{fig:theom_graddeppr}
\end{center}
\end{figure}   

The gradients considered in Theorem \ref{graddeppr} are graphically illustrated in Figure \ref{fig:theom_graddeppr}.
Theorem \ref{graddeppr}   
reveals a number of insights concerning the complexity and the invariance of the natural gradients which we are now going to elaborate on. First of all, it 
highlights 
the simplicity of the natural gradient of $D({\mathcal Q} \| \cdot )$ in $p \in {\mathcal P}$. It is nothing but the difference vector between $p$ and its projection $\pi_{\mathcal Q}(p)$. Thus, any complexity of the natural gradient of $D({\mathcal Q} \| \cdot )$ on a model ${\mathcal M}$ arises from the projection of that difference vector onto the tangent space $T_p {\mathcal M}$ and therefore depends very much on the structure of ${\mathcal M}$. For a Bayesian graphical model,  $T_p{\mathcal M}$ decomposes in a convenient way so that some of the original simplicity is preserved after projection. A corresponding more precise statement will be formulated at the end of this article, in Proposition \ref{orthogonalVectors}. 
Furthermore, the gradient (\ref{gradientOnModel}) of the function $D({\mathcal Q} \|\cdot)$, defined on ${\mathcal M}$, can now be compared with the gradient (\ref{const}) of  the original function $D(p^\ast \| \cdot)$ which is defined on ${\mathcal M}_V$. According to the invariance (\ref{invariancedist}), 
these two gradients are equivalent, if ${\mathcal M}$ is cylindrical, which implies that gradient descent learning in ${\mathcal M}$ yields exactly the same trajectories as the gradient descent learning in ${\mathcal M}_V$. This is a consequence of the corresponding invariance of the Fisher-Rao metric as formulated by Chentsov and not at all given for other choices of Riemannian  metrics \citep{chentsov82}. While the requirement for a model to be cylindrical is quite restrictive, it holds for the full model ${\mathcal M} = {\mathcal P}$. This brings us to the last insight of Theorem \ref{graddeppr}. If we do not restrict the optimization to a lower-dimensional model ${\mathcal M}$ then all information-geometric structures are consistent in the sense that the optimization in the extended system, with hidden units, is equivalent to the original optimization with only visible units. Again, any deviation from the invariance (\ref{invariancedist}) arises from the restriction of the optimization to ${\mathcal M}$. 
\medskip

We can illustrate the invariance \eqref{invariancedist} and a possible deviation from it using two example models, denoted by ${\mathcal M}^{(a)}$ and ${\mathcal M}^{(b)}$, which we introduce in what follows.
(The code for reproducing the data and figures in this paper is made available at \cite{code_for_paper}.)
Consider three binary random variables $X_s, X_{t_1}, X_{t_2}$. The manifold  $\cP$ consists of all joint probability distributions $p(x_s, x_{t_1}, x_{t_2})$. With $H = \{s\}$ and $V = \{t_1, t_2\}$, let $\pi_V$ be the marginalization map over $X_s$, i.e.
\begin{eqnarray*}
        \pi_V\colon \cP &\to& \cP_V\\
        p(x_s,x_{t_1},x_{t_2}) &\mapsto& p(x_{t_1},x_{t_2}) = \sum_{x_s} p(x_s,x_{t_1},x_{t_2}). 
\end{eqnarray*}
For a general model $\cM \subseteq \cP$ we study the following curves.  We let $\sigma_0 \in \cM_V$ be the integral curve of the gradient of the primary objective function $D(p^\ast \| \cdot)$, i.e. solving the differential equation 
\begin{equation} \label{eq:traj2}
    \dot\sigma_0(t) \, = \, -{\rm grad}_{\sigma_0(t)}^{{\mathcal M}_V} D(p^\ast \| \cdot), \ \ \ \sigma_0(0) \, =\,  \pi_V(p). 
\end{equation}
Furthermore, we let the curve $\gamma_1$ in $\cM$ be the solution of the differential equation 
\begin{equation} \label{eq:traj1}
    \dot\gamma_1(t) \, =\,  -{\rm grad}^{{\mathcal M}}_{\gamma_1(t)} D(\cQ \| \cdot ), \ \ \ \gamma_1(0) \,=\, p, 
\end{equation}
and $\sigma_1 = \pi_V \circ \gamma_1$ in $\cM_V$ be the projection of $\gamma_1$.  
\medskip

\begin{figure}[h]
    \centering
    \begin{tikzpicture}
        \node at (0,0) {
        \begin{tikzpicture}[neuron/.style={circle,draw, minimum size=.7cm, inner sep=0}]
            \node (g) at (-1.3,.8) {$G^{(a)}$};
            \node (a) [neuron] at (0,0) {$s$};
            \node (b) [neuron] at (-1,-.7) {$t_1$};
            \node (c) [neuron] at (1,-.7) {$t_2$};
        \end{tikzpicture}
        };
        \node at (5,0) {
        \begin{tikzpicture}[neuron/.style={circle,draw, minimum size=.7cm, inner sep=0}]
            \node (g) at (-1.3,.8) {$G^{(b)}$};
            \node (a) [neuron] at (0,0) {$s$};
            \node (b) [neuron] at (-1,-.7) {$t_1$};
            \node (c) [neuron] at (1,-.7) {$t_2$};
            \draw [->] (a) -- (b);
            \draw [->] (a) -- (c);
        \end{tikzpicture}};
    \end{tikzpicture}
    \caption{Graphical representations of the models $\cM^{(a)}$ and $\cM^{(b)}$.}
    \label{fig:3NodeModels}
\end{figure}

We are now going to define two models in $\cP$, ${\mathcal M}^{(a)}$ and
${\mathcal M}^{(b)}$, given by the corresponding graphs $G^{(a)}$ and $G^{(b)}$ in Figure \ref{fig:3NodeModels}.
We begin with the model $\cM^{(a)}$, which we define as the set of probability distributions for which $X_s, X_{t_1}, X_{t_2}$ are independent, that is,
\begin{equation}\label{eq:cylindrical_model_def}
    \cM^{(a)} \,=\, \{p \in \cP \,:\, p(x_s, x_{t_1}, x_{t_2}) \,=\, p(x_s)p(x_{t_1})p(x_{t_2}) \}. 
\end{equation}
Graphically, these are all the distributions factorizing over the graph $G^{(a)}$ in Figure \ref{fig:3NodeModels}. It can be shown that this model is cylindrical.\footnote{See Example 1 in Appendix \ref{sec:examples} for a two-node example of this.}
Owing to Theorem \ref{graddeppr}, we know that the curves $\sigma_0$ and $\sigma_1$ are identical.
This is shown in Figure \ref{fig:cylindrical_with_indep_manifold_ELBO}, where the model $\cM^{(a)}_V$ is plotted by the blue grid and the indistinguishable curves $\sigma_0$ and $\sigma_1$ are denoted by the solid black line.
Note that the black line also represents the gradient curve coming from the evidence lower bound which is going to be discussed in the next section.

\medskip

Similarly, we now define
\begin{equation}\label{eq:non_cylindrical_model_def}
    \cM^{(b)} \,=\, \{ p \in \cP \,:\, p(x_s, x_{t_1}, x_{t_2}) \,=\, p(x_s)p(x_{t_1}|x_s)p(x_{t_2}|x_s) \}.
\end{equation}
It consists of those distributions that factorize over the graph $G^{(b)}$ in Figure \ref{fig:3NodeModels}. In Example 3 of Appendix \ref{sec:examples} we show that this model is not cylindrical. The model $\cM^{(b)}_V$ is in this case the full simplex $\mathcal{P}_V$.  
Figure \ref{fig:KL_traj_2_all_three_curves} (top) shows the trajectories of the curves $\sigma_0$ and $\sigma_1$ defined by \eqref{eq:traj2} and \eqref{eq:traj1}, respectively, using dashed blue and solid green lines. (The solid red lines are related to the ELBO objective function elaborated on in the next section.) 
Figure \ref{fig:KL_traj_2_all_three_curves} (bottom-left) shows the same trajectories in coordinates as functions of time and Figure \ref{fig:KL_traj_2_all_three_curves} (bottom-right) shows the KL-divergence evaluated on these trajectories as a function of time.
Note that now the trajectories of $\sigma_0$ and $\sigma_1$ do not coincide, deviating from the situation of Figure \ref{fig:cylindrical_with_indep_manifold_ELBO}, due to $\cM^{(b)}$ not being cylindrical. 
In spite of this, the trajectories converge to the target distribution as evident from the top two rows.
Furthermore, the evaluations of the KL-divergence along these different trajectories is almost indistinguishable.
Since this decay of KL-divergence corresponds directly to the speed of learning, understanding the effect of a model being cylindrical on the speed of learning is an important question for future research.
Finally, observe that since $\mathcal{M}_V^{(b)} = \mathcal{P}_V$, the trajectories of the integral curves $\sigma_0$ (dashed blue) of the gradient of $D(p^\ast \| \cdot)$ are straight lines. This is a consequence of \eqref{gradl} and has more general implications on the learning  \citep{datar2025convergence}.
\medskip 

In order to measure the deviation from the invariance \eqref{invariancedist} we evaluate the cosine similarity of the involved vector fields. More precisely, we compute the cosine similarity between the vectors ${\rm grad}^{{\mathcal M}_V}_{\pi_V(p)}  D(p^\ast \| \cdot )$ and $d\pi_V \left( {\rm grad}^{\mathcal M}_p  D({\mathcal Q} \| \cdot ) \right)$ in $T_p\cM_V^{(b)}$, thereby assuming them to be non-zero.  
In general, the cosine similarity between two non-zero vectors $A$ and $B$ in an inner product space is defined as
\begin{equation} \label{eq:cosinesim}
    \cos(\alpha) \,=\, \frac{\langle A, B\rangle}{\|A\| \|B\|}, 
\end{equation}
where $\alpha$ is the angle between $A$ and $B$. The cosine similarity reaches its maximal value $1$ when $A$ and $B$ point in the same direction and its minimal value $-1$ when $A$ and $B$ point in opposite directions. Clearly, in our setting the inner product and the norm are given by the Fisher-Rao metric in $T_p\cM^{(b)}_V$.
For a fixed $p^*$, the cosine similarity depends on the base point $p$ which is sampled from $\cM^{(b)}$ according to Jeffrey's prior. 
The results are plotted in the histograms in Figure \ref{fig:hist_comparison} (left). 
As one can see, most points are close to $1$. 
In fact, more than $83\%$ of the samples are above $0.7$. 
Furthermore, all of the values are larger than zero, which means that both vector fields qualify for optimizing the primary objective function $D(p^* \| \cdot )$.
This is not a coincidence and follows directly from Proposition \ref{samedir}.
\medskip

\section{The Natural Gradient of the Evidence Lower Bound} \label{natelbo}
In Theorem \ref{graddeppr}, we have related the gradient of the primary objective functions $D(p^\ast \| \cdot)$ to the gradient of $D({\mathcal Q} \| \cdot)$.  
We are now going to replace the entire set ${\mathcal Q}$ in $D({\mathcal Q} \| \cdot)$ by a single point $q \in {\mathcal Q}$, leading to the upper bound $D(q \| \cdot)$. In Theorem \ref{mainthdist} below, we will then  relate the gradient of the primary objective function $D(p^\ast \| \cdot)$ to the gradient of $D(q \| \cdot)$. This will finally allow us to study the natural gradient of the evidence lower bound in relation to the natural gradient of the evidence.
\medskip

For any 
$q \in {\mathcal Q}$ and $p \in {\mathcal M}$, we apply the Pythagorian relation and obtain  
\begin{eqnarray}
   {D({\mathcal Q} \| p)} 
    & \leq & D({\mathcal Q} \| p) + {D(q \| \pi_{\mathcal Q}(p) ) } \label{addterm}\\
   & = & {D(\pi_{\mathcal Q}(p) \| p) } + {D(q \| \pi_{\mathcal Q}(p) ) } .   
   \label{firstand second} \nonumber \\
       & = & D(q \|  p)  \label{upperbound3} 
\end{eqnarray}
It can be easily verified that the additional term
in  (\ref{addterm}), $D(q \| \pi_{\mathcal Q}(p))$,   coincides with ${\rm GAP}(q,p)$ as defined by (\ref{gap}). 
Now, instead of taking the gradient of ${D({\mathcal Q} \| \cdot)}$
we take the gradient of the upper bound (\ref{upperbound3}), with a fixed $q$, and analyze the effect of this replacement. For that, let us first rewrite this upper bound as follows:
\begin{equation} \label{twocompo}
   D(q \| p) \; = \; 
   D({\mathcal Q} \| p ) + {\rm GAP}(q,p).
\end{equation}
For the gradient of $D(q \| \cdot )$ in ${\mathcal P}$, we obtain  
\begin{eqnarray}
         {\rm grad}^{\mathcal P}_p D(q \| \cdot) &=& {\rm grad}^{\mathcal P}_p \, D({\mathcal Q} \| \cdot ) + 
{\rm grad}^{\mathcal P}_p \, {\rm GAP}(q,\cdot) \nonumber \\
&=& (p - \pi_{\mathcal Q}(p)) + 
   (\pi_{\mathcal Q}(p) - q) .
   \qquad\quad  \mbox{(by (\ref{gradientinp}) and 
   (\ref{gradl}))}
   \label{newcomp} 
\end{eqnarray}
It is easy to see that the first difference vector in (\ref{newcomp}), {$p - \pi_{\mathcal Q}(p)$}, is an element of the horizontal space ${\mathcal H}_p$, whereas the second one, $\pi_{\mathcal Q}(p) - q$, is an element of the vertical space ${\mathcal V}_p$. Therefore, the $d\pi_V$-image of the latter difference vector vanishes:  
\begin{eqnarray*}
d\pi_V (\pi_{\mathcal Q}(p) - q) 
   & = & \sum_{x_H} 
     \left( p^\ast(x_V) p(x_H | x_V) - q(x_V, x_H) \right) \qquad (\mbox{by (\ref{projection})})  \\
  & = & p^\ast(x_V) \sum_{x_H} 
     \left( p(x_H | x_V) - q(x_H | x_V) \right)   \\
  & = & p^\ast(x_V) (1 - 1) \; = \; 0, 
\end{eqnarray*}
or equivalently,
\[
   d\pi_V \left( {\rm grad}^{\mathcal P}_p \, {\rm GAP}(q, \cdot)\right) \; = \; 0.  
\] 
In summary, we obtain 
\begin{eqnarray*}
    d\pi_V \left( {\rm grad}^{\mathcal P}_p D(q \| \cdot)\right)  
    & = & d\pi_V \left( p - \pi_{\mathcal Q}(p) \right) \\
    & = & \pi_V(p) - p^{\ast} \\
    & = & {\rm grad}_{\pi_V(p)}^{{\mathcal P}_V} D(p^{\ast} \| \cdot) . 
\end{eqnarray*}
This shows that the Fisher-Rao gradient of the primary objective function $D(p^{\ast} \| \cdot)$ on ${\mathcal P}_V$ is not affected at all by the extension of the problem to the set ${\mathcal P} = {\mathcal P}_{V,H}$. When we replace ${\mathcal P}$ by a more general model 
${\mathcal M}$ this invariance only holds, if ${\mathcal M}$ is cylindrical.

\begin{theorem} \label{mainthdist}
Let ${\mathcal M}$ be a cylindrical model in ${\mathcal P}$, let $p \in {\mathcal M}$ be admissible, and let $q \in {\mathcal Q}$. Then
\begin{equation}  
 d\pi_V \left( {\rm grad}^{{\mathcal M}}_p \, 
   {\rm GAP}(q, \cdot ) \right) 
   \; = \; 0, \label{vanishgap} 
\end{equation}
and therefore 
\begin{equation} 
   d\pi_V \left( {\rm grad}^{{\mathcal M}}_p 
   D(q \| \cdot ) \right) 
           \; = \;  {\rm grad}_{\pi_V(p)}^{{\mathcal M}_V} \, D(p^\ast \| \cdot). \label{extension}
\end{equation}  
\end{theorem} 
\begin{proof} 
We begin with the gradient of ${\rm GAP}(q,\cdot)$:
\begin{equation*}
  {\rm grad}^{{\mathcal M}}_p \, {\rm GAP}(q, \cdot ) 
        \; = \; \Pi_p \left( {\rm grad}^{{\mathcal P}}_p \,
        {\rm GAP}(q,\cdot ) \right) 
        \; = \; \Pi_p \left( 
        \pi_{\mathcal Q}(p) - q
        \right) 
        \; = \;  \Pi_p \left( {(p - q)}^{\mathcal V} \right). 
\end{equation*}
We know, by definition, that 
${(p - q)}^{\mathcal V}$ is contained in ${\mathcal V}_p$. 
According to Lemma \ref{invcylind}, the vector ${(p - q)}^{\mathcal V}$ remains in  ${\mathcal V}_p$ after projecting it onto the tangent space $T_p {\mathcal M}$, that is,  
\[
   \Pi_p \left( {(p - q)}^{\mathcal V} \right) \; \in \; {\mathcal V}_p. 
\]
Therefore, this vector is mapped via $d\pi_V$ to $0$, which proves 
(\ref{vanishgap}). 
With this, we have   
\begin{eqnarray*}
  d\pi_V \left( {\rm grad}^{{\mathcal M}}_p \, D(q \| \cdot ) \right)  
        & = & 
   d\pi_V \left( {\rm grad}^{{\mathcal M}}_p \, 
   D({\mathcal Q} \| \cdot ) \right) +   
   d\pi_V \left( {\rm grad}^{{\mathcal M}}_p \, 
   {\rm GAP}(q, \cdot )\right) \\
   & = & d\pi_V \left( {\rm grad}^{{\mathcal M}}_p \, 
   D({\mathcal Q} \| \cdot ) \right), 
\end{eqnarray*}
and with (\ref{invariancedist}) of Theorem  \ref{graddeppr} 
we finally obtain (\ref{extension}).
\end{proof}

\begin{figure}[ht]
\begin{center}           \includegraphics[width=70mm]{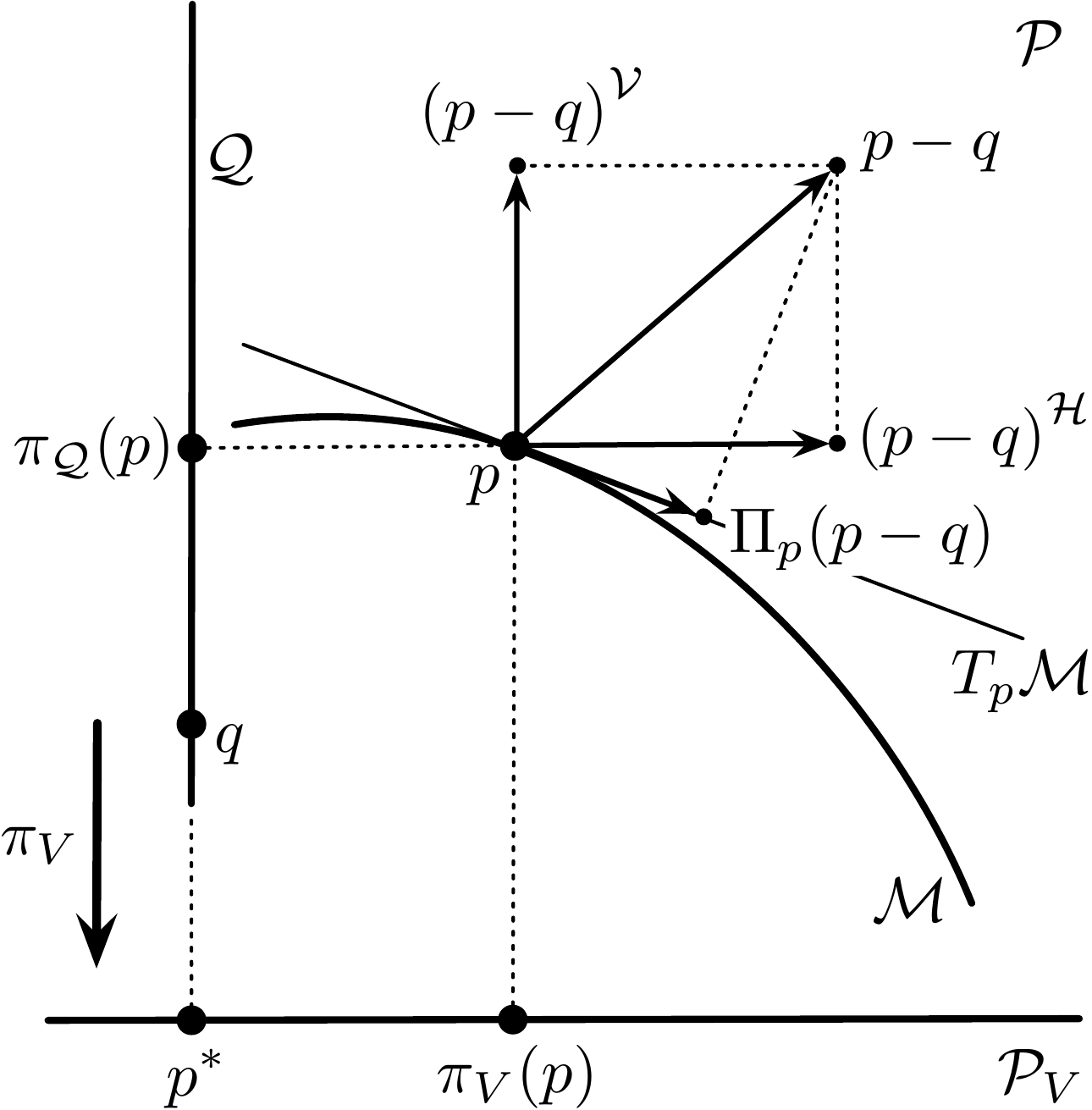}
\caption{Illustration of gradients considered in Theorem \ref{mainthdist}.}
\label{fig:theom_mainthdist}
\end{center}
\end{figure}   
The gradients considered in Theorem \ref{mainthdist} are graphically illustrated in Figure \ref{fig:theom_mainthdist}.
Note that while the invariance (\ref{extension}) appears to be very similar to the invariance (\ref{invariancedist}),  
it is in fact quite different. The main difference is that the objective function on ${\mathcal M}$, the function $D(q \| \cdot)$, is not ``just'' the pull-back of an objective function on ${\mathcal M}_V$. It consists of a pull-back component, the first term on the RHS of (\ref{twocompo}), and the function ${\rm GAP}(q,\cdot)$, the second term on the RHS of (\ref{twocompo}), which varies only in vertical direction so that the $d\pi_V$-image of its gradient vanishes. 
Note that this result, expressed by (\ref{vanishgap}), depends crucially on the information-geometric structures and does not hold for the standard Euclidean geometry defined in terms of a coordinate system. We give an example in Appendix \ref{sec:euclid}.
\medskip

We illustrate the invariance \eqref{extension} and a possible deviation from it by revisiting the example models $\cM^{(a)}$ and $\cM^{(b)}$ from the previous section depicted in Figure \ref{fig:3NodeModels}. Now, for a general model $\cM$ we let the curve $\gamma_2$ in $\cM$ be the solution to the differential equation 
\begin{equation*} 
    \dot\gamma_2(t) \,=\, - {\rm grad}^{{\mathcal M}}_{\gamma_2(t)} \, D(q \| \cdot ), \ \ \ \gamma_2(0) \,=\, p, 
\end{equation*}
and $\sigma_2 = \pi_V \circ \gamma_2$ the projection of $\gamma_2$.
Since $\cM^{(a)}$ is cylindrical, we can use Theorems \ref{graddeppr} and \ref{mainthdist}
to conclude that the curves $\sigma_0$, $\sigma_1$ and $\sigma_2$ are identical.
This is depicted in Figure \ref{fig:cylindrical_with_indep_manifold_ELBO} where the blue grid represents the cylindrical model $\cM^{(a)}_V$, as a subset of $\cP_V$, and the solid black line represents the indistinguishable curves $\sigma_0$, $\sigma_1$ and $\sigma_2$.
\medskip

\label{discussion-Mb-2}
For the non-cylindrical model $\cM^{(b)}$ we plot the trajectory of $\sigma_2$ in Figure \ref{fig:KL_traj_2_all_three_curves} (top) in solid red and compare it with the trajectory of $\sigma_0$ (shown in dashed blue) and $\sigma_1$ (shown in solid green).
Figure \ref{fig:KL_traj_2_all_three_curves} (bottom-left) shows the trajectories in coordinates as functions of time and Figure \ref{fig:KL_traj_2_all_three_curves} (bottom-right) shows the KL-divergence evaluated on these trajectories as a function of time.
Note again that the trajectory $\sigma_2$ is distinct from $\sigma_0$ and $\sigma_1$.
However, $\sigma_2$ also converges to the same target distribution as that of $\sigma_0$ or $\sigma_1$ and the KL-divergence evaluation is barely distinguishable.
\begin{figure}[h!]
    \centering	  \input{data_for_plots/cylindrical_with_indep_manifold_new_color}
	\centering
 \vspace{-1cm}
	\caption{The blue grid represents the set of independent probability distributions over two random variables corresponding to the cylindrical model $\cM^{(a)}_V$, as a subset of $\cP_V$.
    The cylindrical model $\cM^{(a)}$ is defined in \eqref{eq:cylindrical_model_def} and depicted in Figure \ref{fig:3NodeModels}.
    The solid black line shows the overlapping trajectories $\sigma_0$, $\sigma_1$ and $\sigma_2$, where $\sigma_0$ satisfies $ \dot\sigma_0(t) = -{\rm grad}_{\sigma_0(t)}^{{\mathcal M}^{(a)}_V} D(p^\ast \| \cdot)$, $\sigma_1=\pi_V \circ \gamma_1$ is the projection of the negative gradient curve $\gamma_1$  satisfying 
    $ \dot\gamma_1(t) = -{\rm grad}^{{\mathcal M}^{(a)}}_{\gamma_1(t)} \, D(\cQ \| \cdot )$ and $\sigma_2=\pi_V \circ \gamma_2$ is the projection of the negative gradient curve $\gamma_2$ satisfying $  \dot\gamma_2(t) = - {\rm grad}^{{\mathcal M}^{(a)}}_{\gamma_2(t)} \, D(q \| \cdot)$.
     Theorem \ref{graddeppr} and \ref{mainthdist} imply that $\sigma_0$, $\sigma_1$ and $\sigma_2$ are identical.
    }
\label{fig:cylindrical_with_indep_manifold_ELBO}
\end{figure}
In line with the discussion on cosine similarity from the last section, we evaluate the cosine similarity between the vectors ${\rm grad}^{{\mathcal M}_V}_{\pi_V(p)}  D(p^\ast \| \cdot )$ and $d\pi_V \left( {\rm grad}^{\mathcal M}_p D(q \| \cdot ) \right)$ defined in equation \eqref{eq:cosinesim}. 
We sample $p$ in $\cM^{(b)}$ according to Jeffrey's prior and plot the results in the histograms in Figure \ref{fig:hist_comparison} (right).
As one can see, the points are still close to 1 but on average lower than the cosine similarities between ${\rm grad}^{{\mathcal M}_V}_{\pi_V(p)}  D(p^\ast \| \cdot )$ and $ d \pi_V \left( {\rm grad}_p^{{\mathcal M}} \, 
D({\mathcal Q} \| \cdot ) \right)$. Moreover, note that in this case not all points lie above zero anymore.
Investigating conditions under which this cosine similarity is negative is an interesting question for future research. 
\begin{figure}
	\centering
	\input{data_for_plots/simplex_non_cylindrical_1_three_curves}
    \input{data_for_plots/coordinates_traj_1_three_curves}	
%
%
\begin{tikzpicture}

\begin{axis}[%
width=0.43*4.521in,
height=0.43*3.566in,
at={(0.758in,0.481in)},
scale only axis,
xmin=0,
xmax=40,
xlabel style={font=\color{white!15!black}},
xlabel={time ($t$)},
ymin=0,
ymax=1.2,
ylabel style={font=\color{white!15!black}},
ylabel={$D(p^{\ast}||p(t))$},
axis background/.style={fill=white},
title style={font=\bfseries},
title={KL-divergence on $\sigma_0$, $\sigma_1$, $\sigma_2$}
]
\addplot [color=red]
  table[row sep=crcr]{%
1	1.13174732449755\\
2	0.881592610824368\\
3	0.720277057620936\\
4	0.601842288530546\\
5	0.509534252723522\\
6	0.435061784869224\\
7	0.373623486932604\\
8	0.322151341162223\\
9	0.278548163800667\\
10	0.241308220609004\\
11	0.209309140297655\\
12	0.181689096695929\\
13	0.157770065828528\\
14	0.13700765168817\\
15	0.118957066251296\\
16	0.103249374822187\\
17	0.0895745124294371\\
18	0.0776689098538578\\
19	0.067306341842598\\
20	0.0582910759673751\\
21	0.0504526899973381\\
22	0.043642110793444\\
23	0.0377285498685967\\
24	0.0325970943358201\\
25	0.0281467718132586\\
26	0.0242889529924399\\
27	0.0209459913116144\\
28	0.0180500283550427\\
29	0.0155419174851606\\
30	0.0133702371902407\\
31	0.0114903797680531\\
32	0.00986371043660796\\
33	0.00845679724430185\\
34	0.00724071404502115\\
35	0.00619041831054236\\
36	0.00528420369432078\\
37	0.00450322491696564\\
38	0.0038310903541578\\
39	0.00325351604873771\\
40	0.00275803388632757\\
41	0.00233374634791098\\
42	0.00197112046563268\\
43	0.00166181420271012\\
44	0.00139852929882061\\
45	0.00117488553659222\\
46	0.000985312293890274\\
47	0.000824954084681947\\
48	0.000689587521180606\\
49	0.000575547736595914\\
50	0.000479662791544141\\
};
\addplot [color=blue, dashed]
  table[row sep=crcr]{%
1	1.13174732449755\\
2	0.837525440979601\\
3	0.670999537372635\\
4	0.55525335255496\\
5	0.467639196541227\\
6	0.398214461708968\\
7	0.34162713725406\\
8	0.294619963449455\\
9	0.255041173418504\\
10	0.221383793372278\\
11	0.192545485254865\\
12	0.167692417777385\\
13	0.146176936337598\\
14	0.127485128557905\\
15	0.111201991077076\\
16	0.0969874637005775\\
17	0.0845594457242519\\
18	0.0736814525997729\\
19	0.064153447709751\\
20	0.0558049024732726\\
21	0.0484894555664781\\
22	0.0420807426028575\\
23	0.0364690977432941\\
24	0.0315589151742417\\
25	0.0272665170882882\\
26	0.0235184154123559\\
27	0.0202498831116536\\
28	0.0174037713255591\\
29	0.0149295233956011\\
30	0.0127823477148741\\
31	0.0109225194146111\\
32	0.00931478700835229\\
33	0.00792786480149611\\
34	0.00673399554232332\\
35	0.00570857071831264\\
36	0.00482979827998385\\
37	0.00407840953278033\\
38	0.00343739856221207\\
39	0.0028917889073865\\
40	0.00242842331453846\\
41	0.00203577331618506\\
42	0.00170376611807243\\
43	0.00142362685696829\\
44	0.00118773473788332\\
45	0.000989491889120717\\
46	0.000823204006926981\\
47	0.000683972017170617\\
48	0.000567594077172929\\
49	0.000470477292812159\\
50	0.000389558548591549\\
};
\addplot [color=olivegreen]
  table[row sep=crcr]{%
1	1.13174732449755\\
2	0.888213151271917\\
3	0.719818350412611\\
4	0.595093736767402\\
5	0.498576738944578\\
6	0.421630964266045\\
7	0.358977077393503\\
8	0.307161447268425\\
9	0.263803651716861\\
10	0.227194254451613\\
11	0.196064693942496\\
12	0.169448096852925\\
13	0.146591103485658\\
14	0.126895795494172\\
15	0.109880165376304\\
16	0.0951504350711913\\
17	0.0823811920118997\\
18	0.0713008284484672\\
19	0.0616806677456967\\
20	0.0533267101991711\\
21	0.0460732761995211\\
22	0.0397780474580934\\
23	0.0343181542384075\\
24	0.029587055847932\\
25	0.0254920299212354\\
26	0.0219521338014989\\
27	0.0188965353167743\\
28	0.0162631348151674\\
29	0.0139974183574609\\
30	0.0120514954071065\\
31	0.0103832845267118\\
32	0.00895581839514767\\
33	0.00773664552932949\\
34	0.00669731086811724\\
35	0.00581290116021194\\
36	0.00506164411112774\\
37	0.00442455264291633\\
38	0.00388510752004846\\
39	0.00342897308456923\\
40	0.0030437419947818\\
41	0.00271870573432663\\
42	0.00244464830479813\\
43	0.00221366098109795\\
44	0.00201897633527561\\
45	0.00185481995709119\\
46	0.00171627844798777\\
47	0.00159918236419574\\
48	0.00150000285363336\\
49	0.00141576078452208\\
50	0.00134394721126099\\
};
\end{axis}
\end{tikzpicture}%
	\caption{These figures study different gradient trajectories on the model $\cM = \cM^{(b)}$ defined in 
\eqref{eq:non_cylindrical_model_def} and discussed on pages \pageref{eq:non_cylindrical_model_def} 
    and \pageref{discussion-Mb-2}. The
    top figure shows  the curve $\sigma_0$  satisfying $ \dot\sigma_0(t) = -{\rm grad}_{\sigma_0(t)}^{{\mathcal M}_V} D(p^\ast \| \cdot)$ (dashed-blue),  the curve $\sigma_1=\pi_V \circ \gamma_1$, with $\gamma_1$ satisfying $ \dot\gamma_1(t) = -{\rm grad}^{{\mathcal M}}_{\gamma_1(t)} \, D(\cQ \| \cdot )$ (solid green) and the curve $\sigma_2=\pi_V \circ \gamma_2$, with $\gamma_2$ satisfying $  \dot\gamma_2(t) = - {\rm grad}^{{\mathcal M}}_{\gamma_2(t)} \, D(q \| \cdot)$ (solid red) for a fixed target distribution (cross).
   The bottom-left figure shows the trajectories $\sigma_0$, $\sigma_1$ and $\sigma_2$ in coordinates as functions of time. The bottom-right figure shows the KL-divergence evaluated on the trajectories $\sigma_0$, $\sigma_1$ and $\sigma_2$.}      
	\label{fig:KL_traj_2_all_three_curves}
\end{figure}
According to equation \eqref{extension}, minimizing the primary objective function $D(p^\ast \| \cdot)$ on ${\mathcal M}_V$ is equivalent to minimizing the function $D(q \| \cdot)$ on ${\mathcal M}$ whenever ${\mathcal M}$ is cylindrical. We now show that this is, at the same time, equivalent to maximizing the evidence lower bound. For any $q \in {\mathcal Q}$ and $p \in {\mathcal M}$, we have   
\begin{eqnarray*}
   D(q \|  p)   
        & = &  \sum_{{x}_V , {x}_H} p^\ast({x}_V) \, q({x}_H | {x}_V ) 
               \ln \frac{p^\ast({x}_V) \, q( {x}_H | {x}_V )}{p( {x}_V , {x}_H )}  \\
        & = & \underbrace{\sum_{x_V} p^\ast({x}_V) \ln p^\ast({x}_V)}_{\leq \, 0} +  
      \sum_{{x}_V , {x}_H} p^\ast({x}_V) \, q({x}_H | {x}_V ) 
               \ln \frac{q( {x}_H | {x}_V )}{p( {x}_V , {x}_H)}  \\
        & = & - {\rm ELBO}(q, p) + \mbox{const.},  
\end{eqnarray*}
where ${\rm ELBO}(q,p)$ is defined by  (\ref{upperbound2}). Thus, the gradient of the function $D(q \| \cdot)$ will be the same as the gradient 
of $- {\rm ELBO}(q, \cdot)$, because the two functions differ only by a constant. More precisely, we have for all non-singular points $p$ of ${\mathcal M}$   
\begin{equation*} \label{elbokl}
   {\rm grad}^{{\mathcal M}}_p \, {\rm ELBO}(q , \cdot ) 
        \; = \; - {\rm grad}^{{\mathcal M}}_p D(q \| \cdot ). 
\end{equation*} 
As we know that the minimization of $D(p^\ast \| \cdot)$ is equivalent to the maximization of the evidence, we have the following immediate consequence of Theorem \ref{mainthdist}.
\begin{figure}
	\centering
%
%
\definecolor{mycolor1}{rgb}{0.00000,0.44700,0.74100}%
\begin{tikzpicture}

\begin{axis}[%
width=0.65*3.521in,
height=0.65*2.566in,
at={(0.758in,0.481in)},
scale only axis,
xmin=-1,
xmax=1,
ymin=0,
ymax=0.85,
ytick={0.2,0.4,0.6,0.8},
ylabel style={font=\color{white!15!black}},
ylabel={Fraction of samples},
xlabel={Cosine similarity},
axis background/.style={fill=white},
title = { $ d \pi_V \left( {\rm grad}^{{\mathcal M}} \, D({\mathcal Q} \| \cdot ) \right)$}
]
\addplot[ybar interval, fill=mycolor1, fill opacity=0.6, draw=black, area legend] table[row sep=crcr] {%
x	y\\
0	2e-05\\
0.1	0.00036\\
0.2	0.002\\
0.3	0.00774\\
0.4	0.01966\\
0.5	0.03646\\
0.6	0.07816\\
0.7	0.12698\\
0.8	0.19292\\
0.9	0.5357\\
1	0.5357\\
};
\end{axis}
\end{tikzpicture}%
%
%
\definecolor{mycolor1}{rgb}{0.00000,0.44700,0.74100}%
\begin{tikzpicture}

\begin{axis}[%
width=0.65*3.521in,
height=0.65*2.566in,
at={(0.758in,0.481in)},
scale only axis,
xmin=-1,
xmax=1,
ymin=0,
ymax=0.85,
ytick={0.2,0.4,0.6,0.8},
ylabel style={font=\color{white!15!black}},
ylabel={Fraction of samples},
xlabel={Cosine similarity},
axis background/.style={fill=white},
title = { $ d \pi_V \left( {\rm grad}^{{\mathcal M}} \, 
D(q \| \cdot ) \right)$}
]
\addplot[ybar interval, fill=mycolor1, fill opacity=0.6, draw=black, area legend] table[row sep=crcr] {%
x	y\\
-1	0.00564\\
-0.9	0.0058\\
-0.8	0.00578\\
-0.7	0.0078\\
-0.6	0.0085\\
-0.5	0.0101\\
-0.4	0.01332\\
-0.3	0.0155\\
-0.2	0.01934\\
-0.1	0.0235\\
0	0.03248\\
0.1	0.03714\\
0.2	0.04208\\
0.3	0.04944\\
0.4	0.0573\\
0.5	0.06982\\
0.6	0.09016\\
0.7	0.10888\\
0.8	0.1256\\
0.9	0.27182\\
1	0.27182\\
};
\end{axis}
\end{tikzpicture}%
	\caption{Histograms showing the cosine similarity between $ d \pi_V \left( {\rm grad}^{{\mathcal M}} \, D({\mathcal Q} \| \cdot ) \right)$ and $  {\rm grad}^{{\mathcal M}_V} D(p^\ast \| \cdot)$ (left) and the cosine similarity between $ d \pi_V \left( {\rm grad}^{{\mathcal M}} \, D(q \| \cdot ) \right)$ and ${\rm grad}^{{\mathcal M}_V} D(p^\ast \| \cdot)$ (right), both with respect to the Fisher-Rao metric.}    
	\label{fig:hist_comparison}
\end{figure}

\begin{corollary} \label{mainth}
Let ${\mathcal M}$ be a cylindrical model in ${\mathcal P}$, let $p \in {\mathcal M}$ be admissible, and let $q \in {\mathcal Q}$.
Then 
\begin{equation} 
d\pi_V \left( {\rm grad}^{{\mathcal M}}_p \, {\rm ELBO}(q , \cdot ) \right) 
           \;  = \;   
           {\rm grad}_{\pi_V(p)}^{{\mathcal M}_V} \, 
  {\rm EVIDENCE} . 
           \label{invarianceelbo}
\end{equation}
In particular, the invariance (\ref{invarianceelbo}) holds in all points of the maximal model ${\mathcal M} = {\mathcal P}$ where ${\mathcal M}_V = {\mathcal P}_V$.  
\end{corollary} 
 
The central insight that underlies this result is the following. Under a certain condition, even though the evidence lower bound ``lives'' in an extended space and provides a bound for the evidence, it is equivalent to it in terms of the natural gradient. The gap does not play any role here. 
The condition is met, in particular, if we evaluate the gradients on the corresponding maximal models ${\mathcal P}$ and ${\mathcal P}_V$. If we replace these maximal models by ${\mathcal M}$ and ${\mathcal M}_V$, respectively, then we have to impose a quite strong assumption on ${\mathcal M}$ for the equivalence to hold. Therefore, our result has a conceptual rather than a direct methodological value. It states that in the absence of constraints by a model, the evidence lower bound does not alter the original optimization at all. This is remarkable and demonstrates the consistency of the information-geometric structures, which involve the Fisher-Rao metric and the 
KL-divergence on ${\mathcal P}$ and ${\mathcal P}_V$. 
Any deviation from the invariance is caused by the restriction of the optimization to a model. 
\medskip

We now summarize the path that we pursued in this article by means of the following diagram:
\[
\begin{xy}
\xymatrix{
   D({\mathcal Q} \| \cdot ) \ar@{~>}[r] &  D(q \| \cdot) \ar@{~>}[r]  & {\rm ELBO}(q , \cdot)   & \hspace{-15mm}\mbox{lifted objective functions on ${\mathcal M}$}               \\
   \\
  D(p^\ast \| \cdot)  \ar@{~>}[r] 
 \ar@/^10mm/@{<->}[uu]|{{\rm Theorem} \; \ref{graddeppr}} 
 \ar@{~>}[uu]
  \ar@{<->}[ruu]|{{\rm Theorem} \; \ref{mainthdist}} 
   &  {\rm EVIDENCE} \ar@{<->}[ruu]|{{\rm Corollary} \; \ref{mainth}} & &   
   \hspace{-8mm} \mbox{primary objective functions on ${\mathcal M}_V$}   
   }
   \end{xy}
\]
The overall intention was to relate the maximization of the evidence to the maximization of the evidence lower bound. This has been achieved with Corollary \ref{mainth}. To get there, we have translated the problem to the information-geometric setting, where the primary objective function is given by the KL-divergence $D(p^\ast \| \cdot)$ defined on ${\mathcal M}_V$. This has then been modified in two steps. In the first step, we replaced the primary objective  function by $D({\mathcal Q} \| \cdot)$ defined on ${\mathcal M}$. The interplay between $D(p^\ast \| \cdot)$ and $D({\mathcal Q} \| \cdot)$ was subject of Theorem \ref{graddeppr}. In the second step, we then replaced $D({\mathcal Q} \| \cdot)$
by $D(q \| \cdot)$ which is also defined on ${\mathcal M}$. The interplay of $D(q \| \cdot)$ and the primary objective function was the subject of Theorem \ref{mainthdist}. In a somewhat parallel story line, we can translate this theorem to a statement about the evidence and its lower bound. This is the subject of Corollary \ref{mainth}. Note that the main results of this article are crucially dependent on the information-geometric structures, suggesting that the variational gap does not alter the original objective of learning too much if the corresponding algorithms are based on the natural gradient. The standard Euclidean gradient, on the other hand, depends on the parametrization of the given model and therefore does not yield such parametrization-independent results. An example demonstrating this is presented in the Appendix \ref{sec:euclid}. However, empirical case studies in higher dimensions are required for more conclusive statements.
\medskip

We conclude this section with Remarks \ref{rem1} and \ref{rem2} which outline further research directions as possible continuations of the present work. 

\begin{remark} \label{rem1}
Theorem \ref{mainthdist} states that the gap $D(q \| \pi_{\mathcal Q}(p))$ has no effect on the learning of the primary objective function $D(p^\ast \| \cdot)$, if the model is cylindrical. It is remarkable that this statement is independent of the choice of $q$ so that no adjustment of $q$ is required. However, if the model is not cylindrical, then the gap {\em will\/} have an effect on the learning. In that case, one could try to adjust $q$ in such a way that the effect is minimal. A natural way to do so is by moving $q$ towards the projection $\pi_{\mathcal Q} (p)$. Ideally, the gap will then vanish and the objective function $D(q \| p)$ reduces to $D({\mathcal Q} \| p )$. However, in a typical learning scenario, $q$ is constrained to a so-called {\em recognition model\/} ${\mathcal Q}'$ which is much smaller dimensional than the data manifold ${\mathcal Q}$ defined by (\ref{datamanifold}). Denoting a minimizer of $D(q \| p)$ with respect to $q \in {\mathcal Q}'$ by $\pi_{{\mathcal Q}'}(p)$, we consider the {\em residual gap\/} 
\[
   D(\pi_{{\mathcal Q}'}(p) \| 
   \pi_{{\mathcal Q}}(p) ). 
\]
This gap vanishes for all $p \in {\mathcal M}$, if ${\mathcal Q}'$ is sufficient in the sense that it already contains all projections of points $p \in {\mathcal M}$ onto the maximally possible recognition model, the data manifold ${\mathcal Q}$, that is 
\[
  {\mathcal Q}' \; \supseteq \; 
  \left\{ 
     \pi_{\mathcal Q}(p) \; : \; 
     p \in {\mathcal M}
  \right\}.
\]
In principle, such a recognition model ${\mathcal Q}'$ has the dimensionality of ${\mathcal M}$. However, representing it in terms of a graphical model typically  leads to a blowup of dimensionality \citep{inversion1, inversion2}, 
which forces us to consider smaller recognition models ${\mathcal Q}'$ for learning with a non-vanishing and even large residual gap. The present work suggests, on the other hand, that even in this case, the effect on the learning of the primary objective function $D(p^\ast \| \cdot )$ can be rather small, if the model is close to being cylindrical. That opens up a way to define concise recognition models ${\mathcal Q}'$ with limited perturbation of the primary optimization problem.       
\end{remark}

\begin{remark} \label{rem2}
In this remark, we outline a way to extend the analysis of the present article to the general situation where the model ${\mathcal M}$ is not assumed to be cylindrical. (Note that all our results for non-cylindrical models refer to a particular example and are numerical in nature.) For that, we require the notion of a {\em cylindrical extension\/} $\widetilde{\mathcal M}$ of ${\mathcal M}$. This is a submodel of ${\mathcal P}$ that satisfies the following conditions:
\[
  {\rm (a)} \quad 
  {\mathcal M} \subseteq \widetilde{\mathcal M}, \qquad {\rm (b)} \quad \pi_V ({\mathcal M}) = \pi_V (\widetilde{\mathcal M}), \qquad 
  {\rm (c) \quad \mbox{$\widetilde{\mathcal M}$ is cylindrical}.}   
\]
It is easy to show that any model ${\mathcal M}$ in ${\mathcal P}$ has a cylindrical extension. For instance, we can simply consider the set 
\[
  \widetilde{\mathcal M} \; = \; 
  \left\{ p \in {\mathcal P} \; : \; \pi_V(p) \in {\mathcal M}_V  \right\},
\]
which is the maximal cylindrical extension of ${\mathcal M}$ with respect to set inclusion. We can now apply Theorem \ref{mainthdist} to a cylindrical extension and obtain 
\begin{eqnarray*}
 d\pi_V \left( {\rm grad}^{\widetilde{\mathcal M}}_p \, D(q\|  \cdot) \right) & =&  d\pi_V \left( {\rm grad}^{\mathcal M}_p \, D(q \| \cdot) \right) +  d\pi_V \left( {\rm grad}^{\perp}_p \, D(q \| \cdot) \right) \\
 & = &  {\rm grad}^{{\mathcal M}_V}_{\pi_V(p)} \, D(p^\ast \| \cdot) ,
\end{eqnarray*}
where ${\rm grad}^{\perp}_p \, D(q \| \cdot)$ denotes the projection of ${\rm grad}^{\widetilde{\mathcal M}}_p \, D(q \| \cdot)$ onto the orthogonal complement of $T_p {\mathcal M}$ in $T_p \widetilde{\mathcal M}$. This finally gives us the following generalization of
(\ref{extension}):
\begin{equation}
  d\pi_V \left( {\rm grad}^{\mathcal M}_p \, D(q \| \cdot) \right) \; = \; 
    {\rm grad}^{{\mathcal M}_V}_{\pi_V(p)} \, D(p^\ast \| \cdot) - d\pi_V \left( {\rm grad}^{\perp}_p \, D(q \| \cdot) \right).
   \label{general}
\end{equation}
Equation (\ref{general}) suggests a way to establish a relation between the natural gradient of $D(q \| \cdot)$ and the natural gradient of the primary objective function $D(p^\ast \| \cdot)$ in the general case, with no restriction to cylindrical models. 
\end{remark}

\section{Simplification of the Learning in the Extended Space} \label{bayesnet}

In this article, we have studied the optimization of a primary objective function defined on a model ${\mathcal M}_V$, where $V$ denotes the set of visible units. We have compared this optimization with a corresponding optimization on an extended model ${\mathcal M}$ which incorporates  hidden units. More precisely, the former objective function on ${\mathcal M}_V$ is the mean evidence, whereas the latter is given by the evidence lower bound defined on ${\mathcal M}$. We have stated that the replacement of the primary objective function by a lower bound can greatly simplify the optimization process. In this section, we are now going to provide an instance of this simplification in the context of Bayesian graphical models. Such a model is defined in terms of a directed acyclic graph $G = (N,E)$ with node set $N = V \cup H$. The points of the corresponding Bayesian graphical model, which we denote by ${\mathcal P}^G$, are those probability distributions $p$ in ${\mathcal P}$ that factorize according to $G$, that is 
\begin{equation*}
     p(x) \,=\, \prod_{s\in N} p(x_s|x_{\pa(s)}).
\end{equation*}
Here, $\pa(s)$ denotes the parents of unit $s$, those units $r \in N$ for which $(r,s) \in E$. Typically, each conditional probability distribution $p(x_s|x_{\pa(s)})$, which we interpret as the local generative mechanism of unit $s$, is parametrized in terms of a local parameter vector $\theta_s =(\theta_{s,1}, \dots, \theta_{s,d_s})$, indicated by $p(x_s|x_{\pa(s)}; \theta_s)$.  
Concatenating all the parameter vectors $\theta_s$ to one vector $\theta = (\theta_s)_{s \in N}$ of size $d = \sum_{s \in N} d_s$, the overall probability distribution is 
parametrized as  
\begin{equation} \label{prodst}
     p_\theta (x) \, := \, p(x ; \theta) 
     \, := \, \prod_{s\in N} p(x_s|x_{\pa(s)}; \theta_s).
\end{equation} 
The model ${\mathcal M}$, given by the collection $p(\cdot ; \theta)$, $\theta \in \Theta \subseteq {\Bbb R}^d$, is a submodel of ${\mathcal P}^G$. Whenever referring to a submodel of a Bayesian graphical model in the following, we mean this kind of a submodel without explicitly mentioning it.  
The  product structure (\ref{prodst}) implies a number of simplifications which have been discussed in \citep{ay2020locality}. In this article, we add a somewhat simple but illuminating point to this discussion.    
\medskip

In order to compute the gradient of an objective function in a non-singular point $p$ of ${\mathcal M}$, we need to project the corresponding gradient in ${\mathcal P}$ onto the tangent space $T_p {\mathcal M}$ of ${\mathcal M}$ in $p$, in terms of the orthogonal projection $\Pi_p$. We have encountered this method of determining the gradient several times in this article.
The projection $\Pi_p$ will be particularly simple, if the tangent space decomposes into orthogonal lower-dimensional spaces. We are now going to highlight this structure for any submodel ${\mathcal M}$ of a Bayesian graphical model. For $\theta \in \Theta$, the tangent space of ${\mathcal M}$ in $p_\theta$, denoted by $T_{p_\theta} {\mathcal M}$, is typically expressed in terms of the derivatives 
\begin{equation*}
   \partial_{s,k} (\theta) 
   \; = \; \sum_x \partial_{s,k} (x; \theta) \, \delta^x ,   
\end{equation*}
with
\begin{eqnarray}     
\partial_{s,k} (x; \theta)  
& := &  \der{\theta_{s,k}} \, p(x ; \theta) \nonumber \\
& = & p(x ; \theta) \, \der{\theta_{s,k}} \, \ln p(x ; \theta)
    \nonumber \\
& = & p(x ; \theta) \, \der{\theta_{s,k}} \, \ln p(x_s | x_{{\rm pa}(s)} ; \theta_s). \label{tangentvec}    
\end{eqnarray}   
The vectors $\partial_{s,k} (\theta)$, $s \in N$, $k = 1, \dots , d_s$, are clearly contained in $T_{p_\theta} {\mathcal M}$ but in general they need not to span $T_{p_\theta} {\mathcal M}$. To express all elements of the tangent space, we assume that the parametrization $\theta \mapsto p_\theta = p(\cdot ; \theta)$ is
{\em proper\/} in the sense that the vectors $\partial_{s,k} (\theta)$, $s \in N$, $k = 1,\dots, d_s$, span $T_{\theta} {\mathcal M}$. On the other hand, assuming a proper parametrization, we cannot expect these vectors to form a basis of $T_{p_\theta} {\mathcal M}$ as they do not have to be linearly inedependent. For a submodel ${\mathcal M}$ of a Bayesian graphical model, however, these vectors give rise to a natural orthogonal decomposition, which simplifies the projection onto the tangent space $T_{p_\theta} {\mathcal M}$. 

\begin{proposition} \label{orthogonalVectors} Let $\cM$ be a submodel of a Bayesian graphical model, parametrized by $\theta$ (not necessarily by a proper parametrization). Then, for $s \neq t$, $1\leq k \leq d_s$, $1\leq l \leq d_t$, we have 
     \begin{equation*}
    g_\theta^{\rm FR} \left( \partial_{s,k}(\theta), \partial_{t,l}(\theta) \right) 
     \,=\, 0.
     \end{equation*}
Assuming that the parametrization $\theta \mapsto p_\theta$ is proper, we obtain an orthogonal decomposition of the tangent space $T_{p_\theta} {\mathcal M}$ into the subspaces   
\begin{equation*}
 T^{(s)}_\theta {\mathcal M} \; := \; 
 {\rm span}\{ \partial_{s,k} (\theta) \; : \; k = 1,\dots, d_s\}, \quad s \in N.
\end{equation*}
\end{proposition}
See Appendix~\ref{sec:proof_proposition_ortho_vectors} for a proof. 


\acks{NA and JvO  acknowledge the support of the Deutsche Forschungsgemeinschaft Priority
Programme ``The Active Self'' (SPP 2134).}


\appendix
\section{Examples of Cylindrical and Non-Cylindrical Models} \label{sec:examples}
\subsection*{Example 1: The Independence Model}

Let us consider the setting in which $\cP$ is the set of distributions over two binary nodes $s$ and $t$. The state space is given by
\begin{eqnarray*}
    \sX &=& \sX_s \times \sX_t,\\
    \sX_s &=& \sX_t \,=\, \{0,1\},
\end{eqnarray*}
and we let $X_r:\sX \to \sX_r, r \in \{s, t\}$ be the projections.
The marginalization map is given by
\begin{eqnarray}\label{eq:dpi_eg1}
    \pi_V\colon \cP &\to& \cP_V, \nonumber\\
    p(x_s,x_t) &\mapsto& p(x_t) \; = \; \sum_{x_s} p(x_s,x_t) ,
\end{eqnarray}
and its differential
\begin{eqnarray*}
    d\pi_V\colon T_p\cP &\to& T_{\pi_V(p)}\cP_V, \\
    A \; = \; \sum_{x_s,x_t} A(x_s,x_t) \delta^{(x_s,x_t)} &\mapsto& \sum_{x_t} \left(\sum_{x_s} A(x_s,x_t)\right) \delta^{x_t}.
\end{eqnarray*}
The vertical and horizontal spaces are given by  
\begin{align}
    &\begin{aligned}
        \cV_p  &\; = \; \mathrm{ker}\,d \pi_V \\
        &\; = \; \left\{ A \in T_p\cP : \sum_{x_s} A(x_s, x_t) \; = \; 0, x_t \in \sX_t \right\} \\
        &\; = \; \vspan \left\{ \delta^{(0,0)} - \delta^{(1,0)}, \delta^{(0,1)} - \delta^{(1,1)} \right\}, 
    \end{aligned}\nonumber  \\    
     &\begin{aligned}
     \cH_p &\; = \;
     (\mathrm{ker} \, d\pi_V )^\perp \\
     &\; = \; \left\{ A \in T_p\cP : \frac{A(0, x_t)}{p(0,x_t)} - \frac{A(1, x_t)}{p(1,x_t)}  \; = \; 0, x_t \in \sX_t \right\} \\
     &\; = \; \vspan \left\{ p(0,0) \delta^{(0,0)} + p(1,0) \delta^{(1,0)}, p(0,1) \delta^{(0,1)}  + p(1,1) \delta^{(1,1)} \right\}. 
    \end{aligned} 
    \label{eq:horizontal2dim}
\end{align} 
Now we let the model be the independence model, given by 
\begin{align*}
    \cM \; = \; \{ p \in \cP: p(x_s, x_t) \; = \; p(x_s)p(x_t) \}.
\end{align*}
\begin{figure}[h!]
\centering
\begin{tikzpicture}[neuron/.style={circle,draw, minimum size=.7cm, inner sep=0}]
    \node (a) [neuron] at (0,0) {$s$};
    \node (b) [neuron] at (0,-1.2) {$t$};
\end{tikzpicture}
\caption{Graph $G$}
\label{fig:bayesian-net1}
\end{figure}
\\
\noindent This model factorizes over the graph depicted in Figure \ref{fig:bayesian-net1}
and can be parameterized as follows:
\begin{align*}
    p(X_s=1; \theta) &\; = \; \theta_s, \\ 
    p(X_t=1; \theta) &\; = \; \theta_t.
\end{align*}
This parametrization gives
\begin{align*}
    p_\theta \; = \; (1-\theta_s) (1-\theta_t) \delta^{(0,0)} + (1-\theta_s) \theta_t \delta^{(0,1)} + \theta_s (1-\theta_t) \delta^{(1,0)} + \theta_s \theta_t \delta^{(1,1)}.
 \end{align*}
 The tangent space  $T_{p_\theta}\cM$ is spanned by the parameter tangent vectors given by 
 \begin{align*}
     \partial_s(\theta) &\; = \; - (1-\theta_t) \delta^{(0,0)} - \theta_t \delta^{(0,1)} +  (1-\theta_t) \delta^{(1,0)} + \theta_t \delta^{(1,1)}, \\
     \partial_t(\theta) &\; = \; - (1-\theta_s) \delta^{(0,0)} + (1 - \theta_s) \delta^{(0,1)}  -\theta_s \delta^{(1,0)} + \theta_s \delta^{(1,1)}.
 \end{align*}
Note that 
\begin{align*}
    \partial_s(\theta) &\; = \; -(1-\theta_t) \left(\delta^{(0,0)} - \delta^{(1,0)}\right) - \theta_t \left( \delta^{(0,1)} - \delta^{(1,1)}\right) \in \cV_{p_\theta}, \\
    \partial_t(\theta) &\; = \; -\frac{1}{(1-\theta_t)} \left(p_\theta(0,0) \delta^{(0,0)} + p_\theta(1,0) \delta^{(1,0)}\right) +\frac{1}{\theta_t} \left(p_\theta(0,1) \delta^{(0,1)}  + p_\theta(1,1) \delta^{(1,1)}\right) \in \cH_{p_\theta}.
\end{align*}
Therefore, this model is cylindrical.

\subsection*{Example 2: Non-Cylindrical Two-Node Model}
\begin{figure}[ht]
    \centering
    \begin{tikzpicture}[neuron/.style={circle,draw, minimum size=.7cm, inner sep=0}]
        \node (a) [neuron] at (0,0) {$s$};
        \node (b) [neuron] at (0,-1.2) {$t$};
        \draw [->] (a) to (b);
    \end{tikzpicture}
    \caption{Graph $G$}
    \label{fig:bayesian-net2}
\end{figure}
For the same $\cP$ and the same $\pi_V$ as in Example 1 (see equation \eqref{eq:dpi_eg1}), let us fix a distribution $\bar{p}(x_s)$ and consider the following model:
\begin{align*}
    \cM \; = \; \{ p \in \cP : p(x_s, x_t) = \bar{p}(x_s)p(x_t|x_s) \}, 
\end{align*}
which factorizes over the graph from Figure \ref{fig:bayesian-net2}. This model can be parametrized by 
\begin{align*}
    &p(X_t = 1 | X_s = 0; \theta) \; = \; \theta_{t,1},\\
    &p(X_t = 1 | X_s = 1; \theta) \; = \; \theta_{t,2}.
\end{align*}
This parametrization gives
\begin{align*}
    p_\theta \; = \; (1-\theta_{t,1}) \bar{p}(0) \delta^{(0,0)} + \theta_{t,1} \bar{p}(0) \delta^{(0,1)} +  (1-\theta_{t,2}) \bar{p}(1) \delta^{(1,0)} + \theta_{t,2} \bar{p}(1) \delta^{(1,1)}.
\end{align*}
The parameter tangent vectors of $T_{p_\theta}\cM$ are given by 
\begin{align*}
    &\partial_1(\theta) \; = \; - \bar{p}(0) \delta^{(0,0)} + \bar{p}(0) \delta^{(0,1)}, \\
    &\partial_2(\theta) \; = \; - \bar{p}(1) \delta^{(1,0)} + \bar{p}(1) \delta^{(1,1)}.
\end{align*}
For the intersection of $T_{p_\theta}\cM$ with $\cV_{p_\theta}$ we have
\begin{align*}
    T_{p_\theta}\cM \cap \cV_{p_\theta} \; = \; \vspan \left\{ \frac{1}{\bar{p}(0)} \partial_1(\theta) - \frac{1}{\bar{p}(1)} \partial_2(\theta) \right\}.
\end{align*}
Note that this space is only one-dimensional. In order for $\cM$ to be cylindrical, we would therefore need that the intersection of $T_{p_\theta}\cM$ with $\cH_{p_\theta}$ is non-trivial. Let us assume by contradiction that there exists $\alpha, \beta$ such that $\alpha \partial_1(\theta) + \beta \partial_2(\theta) \in \cH_{p_\theta}$. WLOG assume $\alpha = 1$. Using the definition of $\cH_{p_\theta}$ from equation \eqref{eq:horizontal2dim}, we get the following conditions:
\begin{align*}
    \frac{-\bar{p}(0)}{p_\theta(0,0)} \; = \; \beta \frac{-\bar{p}(1)}{p_\theta(1,0)}
\end{align*}
and,
\begin{align*}
    \frac{\bar{p}(0)}{p_\theta(0,1)} \; = \; \beta \frac{\bar{p}(1)}{p_\theta(1,1)}.
\end{align*}
Working out these conditions gives $\beta = \frac{1-\theta_{t,2}}{1-\theta_{t,1}}$ and $\beta = \frac{\theta_{t,2}}{\theta_{t,1}}$ respectively. Therefore, we conclude that this model is only cylindrical in the points where $\theta_{t,1} = \theta_{t,2}$ which are exactly the points for which $s$ and $t$ are independent, and is in general not cylindrical.

\subsection*{Example 3: Non-Cylindrical Three-Node Model}

Now, let $\cP$ be the space of probability measures over the sample space $\sX$ given by
\begin{align*}
    \sX &\; = \; \sX_s \times \sX_{t_1} \times \sX_{t_2},\\
    \sX_s &\; = \; \sX_{t_1} \; = \; \sX_{t_2} \; = \; \{0,1\}.
\end{align*}
We let $X_r:\sX \to \sX_r, r \in \{s, t_1, t_2\}$  be the projections.
The marginalization map is given by
\begin{eqnarray*}
    \pi_V\colon \cP &\to& \cP_V,\\
    p(x_s,x_{t_1},x_{t_2}) &\mapsto& p(x_{t_1},x_{t_2}) = \sum_{x_s} p(x_s,x_{t_1},x_{t_2}),
\end{eqnarray*}
and its differential
\begin{eqnarray}\label{eq:dpi_eg3}
    d\pi_V\colon T_p\cP &\to& T_{\pi_V(p)}\cP_V, \nonumber \\
    A = \sum_{x_s,x_{t_1},x_{t_2}} A(x_s,x_{t_1},x_{t_2}) \delta^{(x_s,x_{t_1},x_{t_2})} &\mapsto& \sum_{x_{t_1},x_{t_2}} \left(\sum_{x_s} A(x_s,x_{t_1},x_{t_2})\right) \delta^{(x_{t_1},x_{t_2})}.
\end{eqnarray}
The vertical and horizontal spaces are given by  
\begin{align*}
    \cV_p & \; = \; \mathrm{ker} \, d\pi_V = \left\{ A \in T_p\cP : \sum_{x_s} A(x_s,x_{t_1},x_{t_2}) = 0 \right\}, \\
    \cH_p &\; = \; (\mathrm{ker}\, d\pi_V)^\perp = \left\{ A \in T_p\cP : \sum_{x_s} \frac{A(x_s,x_{t_1},x_{t_2})}{p(x_s,x_{t_1},x_{t_2})} (-1)^{x_s}  = 0 \right\}.
\end{align*}
Now we consider the model given by
\begin{align*}
    \cM \; = \; \{ p \in \cP : p(x_s, x_{t_1}, x_{t_2}) = p(x_s)p(x_{t_1}|x_s)p(x_{t_2}|x_s) \}. 
\end{align*}
Note that this model is both equal to the Bayesian graphical model of distributions that factorise over the graph $G$, and equal to the distributions corresponding to the Boltzmann machine with the undirected graph $G^\sim$, both in Figure \ref{fig:bayesian-net3}. 
\begin{figure}[]
\centering
\begin{tikzpicture}
    \node at (0,0) {
    \begin{tikzpicture}[neuron/.style={circle,draw, minimum size=.7cm, inner sep=0}]
        \node (g) at (-1.3,.8) {$G$};
        \node (a) [neuron] at (0,0) {$s$};
        \node (b) [neuron] at (-1,-.7) {$t_1$};
        \node (c) [neuron] at (1,-.7) {$t_2$};
        \draw [->] (a) -- (b);
        \draw [->] (a) -- (c);
    \end{tikzpicture}
    };
    \node at (5,0) {
    \begin{tikzpicture}[neuron/.style={circle,draw, minimum size=.7cm, inner sep=0}]
        \node (g) at (-1.3,.8) {$G^\sim$};
        \node (a) [neuron] at (0,0) {$s$};
        \node (b) [neuron] at (-1,-.7) {$t_1$};
        \node (c) [neuron] at (1,-.7) {$t_2$};
        \draw [] (a) -- (b);
        \draw [] (a) -- (c);
    \end{tikzpicture}};
\end{tikzpicture}
\caption{(left) Directed graph $G$; (right) Undirected graph $G^\sim$.}
\label{fig:bayesian-net3}
\end{figure}
The model can be parameterized as follows:
\begin{align*}
    p(X_s=1; \theta) &\; = \; \theta_s, \\ 
    p(X_{t_1}=1|X_s=0; \theta) &\; = \; \theta_{t_1,1},\\
    p(X_{t_1}=1|X_s=1; \theta) &\; = \; \theta_{t_1,2},\\
    p(X_{t_2}=1|X_s=0; \theta) &\; = \; \theta_{t_2,1},\\
    p(X_{t_2}=1|X_s=1; \theta) &\; = \; \theta_{t_2,2}.
\end{align*}
As in the previous examples, this parametrization gives
\begin{align*}
    p_\theta \; = \; (1-\theta_s)& (1 - \theta_{t_1,1}) (1 - \theta_{t_2,1}) \delta^{(0,0,0)} + 
        (1-\theta_s) (1 - \theta_{t_1,1}) \theta_{t_2,1} \delta^{(0,0,1)}\\
        &+(1-\theta_s) \theta_{t_1,1}       (1 - \theta_{t_2,1}) \delta^{(0,1,0)} + 
        (1-\theta_s) \theta_{t_1,1} \theta_{t_2,1} \delta^{(0,1,1)} \\
        &+\theta_s (1 - \theta_{t_1,2}) (1 - \theta_{t_2,2}) \delta^{(1,0,0)} +
        \theta_s (1 - \theta_{t_1,2}) \theta_{t_2,2} \delta^{(1,0,1)} \\
        &+\theta_s \theta_{t_1,2}       (1 - \theta_{t_2,2}) \delta^{(1,1,0)} +
        \theta_s \theta_{t_1,2} \theta_{t_2,2} \delta^{(1,1,1)}.
 \end{align*}
To simplify the ensuing long expressions, we next identify the space of signed measures on $\sX$ with $\bR^8$, where we use the following enumeration of the sample space $\sX$: 
\begin{align*}
    ((0,0,0), (0,0,1), (0,1,0), (0,1,1), (1,0,0), (1,0,1), (1,1,0), (1,1,1) ). 
\end{align*}
This gives for example
\begin{align*}
    \delta^{(0,1,0)} \; = \; \begin{bmatrix}
        0 \\ 0 \\ 1 \\ 0 \\ 0 \\ 0 \\ 0 \\ 0 
    \end{bmatrix}
    \quad
    \textnormal{ and }
    \quad
    p_\theta \; = \; \begin{bmatrix}
        (1-\theta_s) (1 - \theta_{t_1,1}) (1 - \theta_{t_2,1}) \\
        (1-\theta_s) (1 - \theta_{t_1,1}) \theta_{t_2,1} \\
        (1-\theta_s) \theta_{t_1,1}       (1 - \theta_{t_2,1}) \\
        (1-\theta_s) \theta_{t_1,1} \theta_{t_2,1} \\
        \theta_s (1 - \theta_{t_1,2}) (1 - \theta_{t_2,2}) \\
        \theta_s (1 - \theta_{t_1,2}) \theta_{t_2,2} \\
        \theta_s \theta_{t_1,2}       (1 - \theta_{t_2,2}) \\
        \theta_s \theta_{t_1,2} \theta_{t_2,2} 
    \end{bmatrix}.
\end{align*}
The parameter tangent vectors of $T_p\cM$ can similarly be identified as
\begin{align*}
    \partial_s(\theta) \; = \; &\begin{bmatrix}
        - (1 - \theta_{t_1,1}) (1 - \theta_{t_2,1}) \\
        - (1 - \theta_{t_1,1}) \theta_{t_2,1} \\
        - \theta_{t_1,1}       (1 - \theta_{t_2,1}) \\
        - \theta_{t_1,1} \theta_{t_2,1} \\
       (1 - \theta_{t_1,2}) (1 - \theta_{t_2,2}) \\
       (1 - \theta_{t_1,2}) \theta_{t_2,2} \\
       \theta_{t_1,2}       (1 - \theta_{t_2,2}) \\
       \theta_{t_1,2} \theta_{t_2,2} 
    \end{bmatrix}, \\
    \partial_{t_1,1}(\theta) \; = \; &\begin{bmatrix}
        - (1-\theta_s)  (1 - \theta_{t_2,1}) \\
        - (1-\theta_s)  \theta_{t_2,1} \\
        (1-\theta_s) (1 - \theta_{t_2,1}) \\
        (1-\theta_s) \theta_{t_2,1} \\
       0 \\ 0 \\ 0 \\ 0
    \end{bmatrix}, \quad 
    \partial_{t_1,2}(\theta) \; = \; \begin{bmatrix}
        0 \\ 0 \\ 0 \\ 0 \\
        - \theta_s  (1 - \theta_{t_2,2}) \\
        - \theta_s  \theta_{t_2,2} \\
        \theta_s (1 - \theta_{t_2,2}) \\
        \theta_s  \theta_{t_2,2} 
    \end{bmatrix},
\end{align*}
\begin{align*}
    \partial_{t_2,1}(\theta) \; = \; &\begin{bmatrix}
        - (1-\theta_s) (1 - \theta_{t_1,1})  \\
        (1-\theta_s) (1 - \theta_{t_1,1})  \\
        - (1-\theta_s) \theta_{t_1,1}        \\
        (1-\theta_s) \theta_{t_1,1}  \\
        0 \\ 0 \\ 0 \\ 0
    \end{bmatrix}, \quad 
    \partial_{t_2,2}(\theta) \; = \; \begin{bmatrix}
        0 \\ 0 \\ 0 \\ 0 \\
        - \theta_s (1 - \theta_{t_1,2})  \\
        \theta_s (1 - \theta_{t_1,2})  \\
        - \theta_s \theta_{t_1,2}        \\
        \theta_s \theta_{t_1,2}  
    \end{bmatrix}.
\end{align*}
We let $B_{p_\theta}\in \bR^{8\times 5}$ be the matrix with these parameter vectors as columns, i.e., 
\begin{align*}
    B_{p_\theta} \; = \; \begin{bmatrix}
        | & | & | & | & | \\
     \partial_s(\theta) & \partial_{t_1, 1}(\theta) & \partial_{t_1, 2}(\theta) & \partial_{t_2, 1}(\theta) & \partial_{t_2, 2}(\theta) \\
     | & | & | & | & |
    \end{bmatrix}.
\end{align*}
In the same spirit as above, the space of signed measures on $\sX_{t_1} \times \sX_{t_2}$ can be identified with $\bR^4$, where we use enumerate the sample space $\sX_{t_1} \times \sX_{t_2}$ as $((0,0), (0,1), (1,0), (1,1))$. For example, this gives 
\begin{align*}
    \delta^{(1,0)} \; = \; \begin{bmatrix}
        0 \\ 0 \\ 1 \\ 0 
    \end{bmatrix}. 
\end{align*}
With the identification of $p \in \cP$ with vectors in $\bR^8$ and the identification of $p_V \in \cP_V$ with vectors in $\bR^4$, we can identify the map $d\pi_p$ defined in \eqref{eq:dpi_eg3} with a matrix $J\in \bR^{4\times 8}$ given by
\begin{align*}
    J \; = \; \begin{bmatrix}
        1 & 0 & 0 & 0 & 1 & 0 & 0 & 0 \\
        0 & 1 & 0 & 0 & 0 & 1 & 0 & 0 \\
        0 & 0 & 1 & 0 & 0 & 0 & 1 & 0 \\
        0 & 0 & 0 & 1 & 0 & 0 & 0 & 1
    \end{bmatrix}.
\end{align*}
Note that 
\begin{align*}
    \dim \left(T_{p_\theta}\cM \cap \cV_{p_\theta} \right) \; = \; \dim \left( \ker \, J B_{p_\theta} \right). 
\end{align*}
Similarly, one can derive 
\begin{align*}
    \dim \left(T_{p_\theta}\cM \cap \cH_{p_\theta} \right) \; = \; \dim \left( \ker \, \tilde{J} G_{p_\theta} B_{p_\theta} \right), 
\end{align*}
where
\begin{align*}
    \tilde{J} \; = \; \begin{bmatrix}
        1 & 0 & 0 & 0 & -1 & 0 & 0 & 0 \\
        0 & 1 & 0 & 0 & 0 & -1 & 0 & 0 \\
        0 & 0 & 1 & 0 & 0 & 0 & -1 & 0 \\
        0 & 0 & 0 & 1 & 0 & 0 & 0 & -1
    \end{bmatrix}
\end{align*}
and $G_{p_\theta}$ is the matrix representative of the Fisher-Rao metric at $p_\theta$, given by 
\begin{align*}
    G_{p_\theta} \; = \; \begin{bmatrix}
        1/p_1 & 0 & 0 & 0 & 0 & 0 & 0 & 0 \\
        0 & 1/p_2 & 0 & 0 & 0 & 0 & 0 & 0 \\
        0 & 0 & 1/p_3 & 0 & 0 & 0 & 0 & 0 \\
        0 & 0 & 0 & 1/p_4 & 0 & 0 & 0 & 0 \\
        0 & 0 & 0 & 0 & 1/p_5 & 0 & 0 & 0 \\
        0 & 0 & 0 & 0 & 0 & 1/p_6 & 0 & 0 \\
        0 & 0 & 0 & 0 & 0 & 0 & 1/p_7 & 0 \\
        0 & 0 & 0 & 0 & 0 & 0 & 0 & 1/p_8
    \end{bmatrix}, 
\end{align*}
with $p_i = p_\theta(x_i)$, where $x_i$ is the $i$th element of the sample space $\sX$. \\

In order to show that this model is not cylindrical, we only have to show this for one specific point. We choose the point $\theta_s = \theta_{t_1, 1} = \theta_{t_2, 1} = 1/2, \theta_{t_1, 2} = \theta_{t_2, 2} = 1/3$. \\
    
For this choice of $\theta$, $B_{p_\theta}$ becomes
\begin{align*}
    B_{p_\theta} \; = \; \begin{bmatrix}
        -1/4  &  -1/4  & 0    & -1/4 & 0    \\
        -1/4  &  -1/4  & 0    & 1/4  & 0    \\
        -1/4  &  1/4   & 0    & -1/4 & 0    \\
        -1/4  &  1/4   & 0    & 1/4  & 0    \\
        4/9   &  0     & -1/3 & 0    & -1/3 \\ 
        2/9   &  0     & -1/6 & 0    & 1/3  \\
        2/9   &  0     & 1/3  & 0    & -1/6 \\
        1/9   &  0     & 1/6  & 0    & 1/6  
    \end{bmatrix}.
\end{align*}
It can be verified that the space $\ker \, JB_{p_\theta}$ is spanned by the following vectors:
\begin{align*}
    \begin{bmatrix}
        3 \\
        0 \\
        1 \\
        1 \\
        0
    \end{bmatrix},
    \begin{bmatrix}
        3 \\
        1 \\
        0 \\
        0 \\
        1
    \end{bmatrix},
\end{align*}
and is therefore two-dimensional. \\

Similarly, the space $\ker \, \tilde{J} G_{p_\theta} B_{p_\theta} $ is spanned by 
\begin{align*}
    \begin{bmatrix}
        -3/16 \\
        9/8 \\
        1 \\
        0 \\
        0
    \end{bmatrix}, 
    \begin{bmatrix}
        -3/16 \\
        0 \\
        0 \\
        9/8 \\
        1
    \end{bmatrix},
\end{align*}
and is therefore also two-dimensional. This means that $(T_{p_\theta}\cM \cap \cV_{p_\theta}) \oplus (T_{p_\theta}\cM \cap \cH_{p_\theta})$ is four-dimensional and therefore unequal to $T_{p_\theta}\cM$ which is five-dimensional. We therefore conclude that $\cM$ is not cylindrical. 

\section{The Natural Versus the Euclidean Gradient of the Variational Gap} \label{sec:euclid} 
Let us again consider the setting in which ${\mathcal P}$ is the set of distributions over two binary nodes $s$ and $t$, where $t$ is the visible and $s$ is the hidden node. The state space is given by 
\begin{align*}
    \sX &\; = \; \sX_s \times \sX_t,\\
    \sX_s &\; = \; \sX_t \; = \; \{0,1\}.
\end{align*}
Thus, we have the four states $(0,0)$, $(1,0)$, $(0,1)$, and $(1,1)$ and the corresponding Dirac measures $\delta^{(0,0)}$, $\delta^{(1,0)}$, $\delta^{(0,1)}$, and $\delta^{(1,1)}$. In what follows, we parametrize ${\mathcal P}$ in terms of 
\[
  \varphi: \; {\Bbb R}^3 \ni \theta = (\theta_1, \theta_2, \theta_3) \;
   \mapsto \; \theta_1 \, \delta^{(0,0)} + \theta_2 \, \delta^{(1,0)} +\theta_3 \, \delta^{(0,1)} + (1 - \theta_1 - \theta_2 - \theta_3 )\, \delta^{(1,1)} \in {\mathcal P},
\]
where we assume $\theta_1,\theta_2,\theta_3 > 0$ and $\theta_1 + \theta_2 + \theta_3 < 1$. To simplify the notation, we abbreviate $1 - \left(\theta_1 + \theta_2 + \theta_3\right)$ by $\theta_4$. The tangent space of ${\mathcal P}$ in $\varphi(\theta)$ is spanned by the basis
\begin{align*}
   \partial_1 (\theta) &\; = \; 
   \frac{\partial \varphi}{\partial \theta_1}(\theta) \; = \;  \delta^{(0,0)} - \delta^{(1,1)}, \\  
   \partial_2 (\theta) &\; = \; 
   \frac{\partial \varphi}{\partial \theta_2}(\theta) \; = \;  \delta^{(1,0)} - \delta^{(1,1)}, \\
   \partial_3 (\theta) &\; = \; 
   \frac{\partial \varphi}{\partial \theta_3}(\theta) \; = \;  \delta^{(0,1)} - \delta^{(1,1)}.
\end{align*}
Application of the differential (\ref{projection}) of the marginalization map $\pi_V: {\mathcal P} \mapsto {\mathcal P}_{V} = {\mathcal P}_{\{t\}}$ gives us 
\begin{align*} 
   d\pi_V(\partial_1(\theta)) & \; = \; \delta^0 - \delta^1, \\
    d\pi_V(\partial_2(\theta)) & \; = \; 0, \\
     d\pi_V(\partial_3(\theta)) & \; = \; \delta^0 - \delta^1. 
\end{align*}
Here, $\delta^0$ and $\delta^1$ denote the Dirac measures of the states $0$ and $1$ of the visible node $t$. 
The Fisher information matrix $G(\theta)$ with components $g^{\rm FR}(\partial_i(\theta), \partial_j(\theta))$ is given as
\begin{align*}
G(\theta) &\; = \; \frac{1}{\theta_4}
\begin{bmatrix}
  \frac{\theta_4}{\theta_1} + 1 & 1 & 1 \\
    1 & \frac{\theta_4}{\theta_2} + 1& 1  \\
    1 & 1 & \frac{\theta_4}{\theta_3} + 1 
\end{bmatrix},
\end{align*}
with inverse 
\begin{align} \label{fisherinverse}
G^{-1}(\theta) &\; = \; 
\begin{bmatrix}
   \theta_1 (1 - \theta_1) & - \theta_1 \theta_2 & - \theta_1 \theta_3 \\
    - \theta_2 \theta_1 & \theta_2 (1 - \theta_2) & - \theta_2 \theta_3  \\
    - \theta_3 \theta_1 & - \theta_3 \theta_2 & \theta_3 (1 - \theta_3) 
\end{bmatrix}.
\end{align}
Given a differentiable function ${\mathcal L}: {\mathcal P} \to {\Bbb R}$, we set
\begin{align*} 
\nabla {\mathcal L}(\theta)
 & :\; = \; 
 \begin{bmatrix} \displaystyle
 \frac{\partial {\mathcal L} \circ \varphi }{\partial \theta_1} (\theta) \\  
 \displaystyle
 \frac{\partial {\mathcal L} \circ \varphi }{\partial \theta_2} (\theta) \\  
 \displaystyle
 \frac{\partial {\mathcal L} \circ \varphi }{\partial \theta_3} (\theta) 
 \end{bmatrix}
\end{align*}
and 
\begin{align*} 
\widetilde{\nabla}{\mathcal L}(\theta)
 & :\; = \; G^{-1}(\theta)\, \nabla 
 {\mathcal L}(\theta) .
\end{align*}
The Euclidean gradient with respect to the standard inner product in ${\Bbb R}^3$ is given by 
\[
   \sum_{i = 1}^3 \left[ \nabla{\mathcal L}(\theta)\right]_i \, \partial_i(\theta), 
\]
whereas the natural gradient involves the Fisher information matrix:
\[
   \sum_{i = 1}^3 \left[ \widetilde{\nabla}{\mathcal L}(\theta)\right]_i \, \partial_i(\theta). 
\]
Learning based on the Euclidean gradient ascent method follows the update rule 
\begin{equation} \label{euclideaniteration}
   \theta_{m + 1} \; = \; \theta_m + \varepsilon \cdot \nabla {\mathcal L}(\theta_m), \qquad m = 0,1,2, \dots,
\end{equation}
whereas the natural gradient method suggests
\begin{equation} \label{naturaliteration}
   \theta_{m + 1} \; = \; \theta_m + \varepsilon \cdot \widetilde{\nabla} {\mathcal L}(\theta_m), \qquad m = 0,1,2, \dots.
\end{equation}
One could apply these iteration rules, for instance, to maximize the evidence and its lower bound, respectively. This article suggests that the replacement of the evidence by its lower bound will have a less ``visible'' effect if we use the natural gradient iteration rule (\ref{naturaliteration}) in comparison with the Euclidean iteration rule (\ref{euclideaniteration}). This can be formally studied by mapping the gradient of the variational gap via the differential $d\pi_V$.        
In what follows, we evaluate the Euclidean as well as the natural gradient of ${\rm GAP}_q := {\rm GAP}(q, \cdot)$, defined by (\ref{gap}). After some straightforward calculations, we obtain
\begin{eqnarray*}
   \left[\nabla \, {\rm GAP}_q(\theta)\right]_1 & = &
 \frac{p^\ast(0)}{\theta_1 + \theta_3} - \frac{p^\ast(1)}{\theta_2 + \theta_4} -  \frac{p^\ast(0) q(0|0)}{\theta_1} + \frac{p^\ast(1) q(1|1)}{\theta_4}, \\
  \left[\nabla \, {\rm GAP}_q(\theta)\right]_2 & = & - \frac{p^\ast(1) q(0|1)}{\theta_2} + \frac{p^\ast(1) q(1|1)}{\theta_4} ,   \\
   \left[\nabla \, {\rm GAP}_q(\theta)\right]_3 & = & \frac{p^\ast(0)}{\theta_1 + \theta_3} - \frac{p^\ast(1)}{\theta_2 + \theta_4} -  \frac{p^\ast(0) q(1|0)}{\theta_3} + \frac{p^\ast(1) q(1|1)}{\theta_4}.   
\end{eqnarray*}
With the inverse of the Fisher information matrix, (\ref{fisherinverse}), this yields  
\begin{eqnarray*}
   \left[\widetilde{\nabla} \, {\rm GAP}_q(\theta)\right]_1 & = &
 \left(\frac{p^\ast(0)}{\theta_1 + \theta_3} - \frac{p^\ast(1)}{\theta_2 + \theta_4}\right) \theta_1 (\theta_2 + \theta_4) - 
 p^\ast(0)q(0|0) + \theta_1, \\
  \left[\widetilde{\nabla} \, {\rm GAP}_q(\theta)\right]_2 & = & - 
  \left(\frac{p^\ast(0)}{\theta_1 + \theta_3} - \frac{p^\ast(1)}{\theta_2 + \theta_4}\right) \theta_2 (\theta_1 + \theta_3) - 
 p^\ast(1)q(0|1) + \theta_2,
    \\
   \left[\widetilde{\nabla} \, {\rm GAP}_q(\theta)\right]_3 & = & 
   \left(\frac{p^\ast(0)}{\theta_1 + \theta_3} - \frac{p^\ast(1)}{\theta_2 + \theta_4}\right) \theta_3 (\theta_2 + \theta_4) - 
 p^\ast(0)q(1|0) + \theta_3.   
\end{eqnarray*}
Mapping the Euclidean gradient with $d\pi_V$ yields 
\begin{eqnarray*}
   d\pi_V \left( \sum_{i = 1}^3 \left[ \nabla \, {\rm GAP}_q(\theta)\right]_i \, \partial_i(\theta) \right) & = & \sum_{i = 1}^3 \left[ \nabla  
   \, {\rm GAP}_q(\theta)\right]_i \, d \pi_V  (\partial_i(\theta)) \\
   & = & \left( 
   \left[ \nabla  
   \, {\rm GAP}_q(\theta)\right]_1
    + 
    \left[ \nabla  
   \, {\rm GAP}_q(\theta)\right]_3
    \right) (\delta^0 - \delta^1).  
\end{eqnarray*}
The same formula holds for the natural gradient where we simply replace $\nabla$ by $\widetilde{\nabla}$. Thus, in both cases we can analyze whether the image of the gradient under $d\pi_V$ vanishes by simply adding the respective first and third components. Let us begin with the natural gradient:
\begin{eqnarray*}
\lefteqn{\left[\widetilde{\nabla} \, {\rm GAP}_q(\theta)\right]_1 + \left[\widetilde{\nabla} \, {\rm GAP}_q(\theta)\right]_3 }\\ 
  & = &  \left(\frac{p^\ast(0)}{\theta_1 + \theta_3} - \frac{p^\ast(1)}{\theta_2 + \theta_4}\right) \theta_1 (\theta_2 + \theta_4) - 
 p^\ast(0)q(0|0) + \theta_1  \\
 & & + \left(\frac{p^\ast(0)}{\theta_1 + \theta_3} - \frac{p^\ast(1)}{\theta_2 + \theta_4}\right) \theta_3 (\theta_2 + \theta_4) - 
 p^\ast(0)q(1|0) + \theta_3 \\
 & = &  \left(\frac{p^\ast(0)}{\theta_1 + \theta_3} - \frac{p^\ast(1)}{\theta_2 + \theta_4}\right) (\theta_1 + \theta_3) (\theta_2 + \theta_4)
 \underbrace{- p^\ast(0)q(0|0) - p^\ast(0)q(1|0)}_{- p^\ast(0)} +  \theta_1 + \theta_3 \\
 & = & p^\ast(0) (\theta_2 + \theta_4) - p^\ast(1) (\theta_1 + \theta_3)- p^\ast(0) + \theta_1 + \theta_3 \\
 & = & p^\ast(0) \underbrace{- p^\ast(0) (\theta_1 + \theta_3) - p^\ast(1) (\theta_1 + \theta_3)}_{-(\theta_1 + \theta_3)}- p^\ast(0) + \theta_1 + \theta_3 \\
 & = & 0.
\end{eqnarray*}
This exemplifies our core result (\ref{vanishgap}) in terms of local coordinates. The same calculation for the Euclidean gradient does not lead to this result. Thus, generically we have  
\[
   \left[{\nabla} \, {\rm GAP}_q(\theta)\right]_1 + \left[{\nabla} \, {\rm GAP}_q(\theta)\right]_3 \; \not= \; 0. 
\]
\section{Proof of Proposition \ref{orthogonalVectors}}\label{sec:proof_proposition_ortho_vectors}
\begin{proof}[Proof of Proposition \ref{orthogonalVectors}]
     Without loss of generality, we identify the unit set $N$ with the set $\{1,\dots, n\}$, $n = |N|$, in a way that is consistent with the graph $G = (N,E)$. That means, whenever $i \in {\rm pa}(s)$ we have $i < s$. Note that such an identification is always possible for a directed acyclic graph. Furthermore, we  assume $s < t$. Then:
      \begin{eqnarray*} 
 \lefteqn{g_\theta^{\rm FR}
        \left( \partial_{s,k}(\theta) , \partial_{t,l}(\theta) \right) } \\
        & = &  \sum_x \frac{1}{p(x;\theta)}  \partial_{s,k}(x; \theta) \, 
        \partial_{t,l}(x ; \theta) \qquad (\mbox{by (\ref{FRmetric})}) \\
          & = & \sum_x \frac{1}{p(x;\theta)} 
          \qquad\qquad\qquad\qquad\quad\;\;\, (\mbox{by (\ref{tangentvec})}) \\ 
          & & \times \left(p(x; \theta) \der{\theta_{s,k}} \ln p(x_s|x_{\pa(s)}; \theta_s)\right) \left( p(x; \theta) \der{\theta_{t,l}} \ln p(x_t|x_{\pa(t)}; \theta_t) \right) \\
          & = &\sum_x  p(x; \theta) \,  \der{\theta_{s,k}} \ln p(x_s|x_{\pa(s)}; \theta_s)  \der{\theta_{t,l}} \ln p(x_t|x_{\pa(t)}; \theta_t) \\
      \end{eqnarray*}
      \begin{eqnarray*} 
          & = & \sum_x \der{\theta_{s,k}} \ln p(x_s|x_{\pa(s)}; \theta_s) \prod_{i=1}^t  p(x_i|x_{\pa(i)};\theta_i) \der{\theta_{t,l}}\ln p(x_t|x_{\pa(t)}; \theta_t) \\
          & & \times \underbrace{\prod_{i=t+1}^n p(x_i|x_{\pa(i)};\theta_i)}_{=\, 1} \\
          & = & \sum_{x_1, ..., x_t} \der{\theta_{s,k}} \ln p(x_s|x_{\pa(s)}; \theta_s) \prod_{i=1}^t  p(x_i|x_{\pa(i)};\theta_i) \der{\theta_{t,l}}\ln p(x_t|x_{\pa(t)}; \theta_t)\\
          & = & \sum_{x_1, ..., x_{t-1}} \der{\theta_{s,k}} \ln p(x_s|x_{\pa(s)}; \theta_s) \sum_{x_t} \prod_{i=1}^t  p(x_i|x_{\pa(i)};\theta_i) \\
          & & \times \der{\theta_{t,l}}\ln p(x_t|x_{\pa(t)}; \theta_t) \\
          & = &  \sum_{x_1, ..., x_{t-1}} \der{\theta_{s,k}} \ln p(x_s|x_{\pa(s)}; \theta_s) \prod_{i=1}^{t-1}  p(x_i|x_{\pa(i)};\theta_i) \sum_{x_t} p(x_t|x_{\pa(t)};\theta_t) \\
          & & \times \der{\theta_{t,l}}  \ln p(x_t|x_{\pa(t)}; \theta_t) \\
          & = & \sum_{x_1, ..., x_{t-1}} \der{\theta_{s, k}} \ln p(x_s|x_{\pa(s)}; \theta_s) \prod_{i=1}^{t-1}  p(x_i|x_{\pa(i)};\theta_i) \der{\theta_{t,l}} \sum_{x_t}  p(x_t|x_{\pa(t)}; \theta_t) \\
          & = & \sum_{x_1, ..., x_{t-1}} \der{\theta_{s,k}} \ln p(x_s|x_{\pa(s)}; \theta_s) \prod_{i=1}^{t-1}  p(x_i|x_{\pa(i)};\theta)  \der{\theta_{t,l}} \, 1 \\
          & = & 0 .
      \end{eqnarray*}
\end{proof}

\bibliography{refs}
\end{document}